\newtheorem{theorem}{\bf Theorem}
\newtheorem{lemma}[theorem]{\bf Lemma}
\newtheorem{proposition}{\bf Proposition}
\newtheorem{definition}{\bf Definition}
\newtheorem{example}{\bf Example}
\newcommand{\qed}{$\Box$}
\newenvironment{proof}{\noindent {\bf Proof:}}{\hfill \qed\\}
\newcommand{\E}{{\mathbb E}}
\newcommand{\ER}{{\mathbb ER}}
\newcommand{\F}{{\mathbb F}}
\newcommand{\counteracts}{\leftrightsquigarrow}
\newcommand{\ncounteracts}{\not\leftrightsquigarrow}
\newcommand{\tl}{\triangleleft}
\newcommand{\R}{\mathrm{R}}
\begin{document}

\title{Admissible and Restrained Revision}
\author{\name Richard Booth \email richard.b@msu.ac.th \\
\addr Faculty of Informatics \\
Mahasarakham University \\
Mahasarakham 44150, Thailand 
\AND
\name Thomas Meyer \email  Thomas.Meyer@nicta.com.au\\
\addr National ICT Australia and \\University of New South Wales\\
223 Anzac Parade\\
Kensington, NSW 2052, Australia
}

% For research notes, remove the comment character in the line below.
% \researchnote
\maketitle

\begin{abstract}
As partial justification of their framework for iterated belief
revision Darwiche and Pearl convincingly argued against Boutilier's
natural revision and provided a prototypical revision operator that
fits into their scheme.  We show that the Darwiche-Pearl arguments
lead naturally to the acceptance of a smaller class of operators
which we refer to as {\em admissible}. Admissible
revision ensures that the penultimate input is not ignored
completely, thereby eliminating natural revision, but includes the
Darwiche-Pearl operator, Nayak's lexicographic revision operator,
and a newly introduced operator called \emph{restrained revision}.
We demonstrate that restrained revision is the most conservative of admissible
revision operators,  effecting as few changes as possible, 
while lexicographic revision is the least
conservative, and point out that restrained
revision can also be viewed as a composite operator, consisting of
natural revision preceded by an application of a ``backwards
revision'' operator previously studied by Papini. Finally, we propose
the establishment of a principled approach for choosing an appropriate
revision operator in different contexts and discuss future work. 
\end{abstract}

\section{Introduction}
The ability to rationally change one's knowledge base in the face of new information
which possibly contradicts the currently held beliefs is a basic characteristic
of intelligent behaviour. Thus the question of {\em belief revision} is of
crucial importance in Artificial Intelligence. In the last twenty years this 
question has received considerable attention, starting from the work of
Alchourr{\'o}n, G{\"ardenfors}, and Makinson \citeyear{Alchourron-ea:85a} -- usually abbreviated to just {\em AGM} -- who proposed a set of rationality 
postulates which 
any reasonable revision operator should satisfy. A
semantic
construction of revision operators was later provided by Katsuno and Mendelzon \citeyear{Katsuno-ea:91a}, 
according to which an agent has in its mind a plausibility ordering
-- a total preorder -- over the set of possible worlds, with the knowledge base associated 
to this ordering being identified with the set of sentences true in all the most plausible
worlds. This approach dates back to the work of Lewis \citeyear{Lewis:73a} on counterfactuals. It was
introduced into the belief revision literature by Grove \citeyear{Grove:88a} and Spohn \citeyear{Spohn:88a}. 
Given a new sentence -- or
{\em epistemic input} -- $\alpha$, the revised knowledge base is set as the
set of sentences true in all the most plausible worlds in which $\alpha$ holds.
As was shown by  Katsuno and Mendelzon \citeyear{Katsuno-ea:91a}, the family of operators defined by
this construction coincides exactly with the family of operators satisfying
the AGM postulates. Due to its intuitive appeal, this construction came to be
widely used in the area. However, researchers soon began to notice a
deficiency with it -- although it prescribes how to
obtain a new {\em knowledge
base}, it remains silent on how to obtain a new {\em plausibility ordering} 
which can then serve as
a target for the next epistemic input. Thus it is not rich
enough to deal adequately with the problem of {\em iterated} belief revision. 
This paper is a contribution to the study of this problem. 

Most iterated revision schemes are sensitive to the history of belief changes\footnote{%
An {\em external} revision scheme like
that of Areces and Becher \citeyear{Areces-ea:2001a} and Freund and Lehmann \citeyear{Freund-ea:94a} is not.},
based on a version of the ``most recent is best'' argument, where the newest information is of higher priority than anything else in
the knowledge base. Arguably the most extreme case of this is Nayak's lexicographic revision
\cite{Nayak:94b,Nayak-ea:2003a}. However, there are operators where,
once {\em admitted} to the knowledge base, it rapidly becomes as much of a candidate for removal as anything else in the set when another, newer, piece of information comes along,
Boutilier's natural revision \citeyear{Boutilier:93a,Boutilier:96a} being a case in point.
A dual to this is what  Rott \citeyear{Rott:2003a} terms {\em radical} revision where the new
information is accepted with maximal, irremediable entrenchment -- see also Segerberg 
\citeyear{Segerberg:98a}.
Another issue to consider is the problem termed
{\em temporal incoherence} \cite{Rott:2003a}: \begin{quote}the comparative
recency of information should translate systematically into
comparative importance, strength or entrenchment\end{quote}

In an influential paper Darwiche and Pearl \citeyear{Darwiche-ea:97a}
proposed a framework for iterated revision. Their proposal is
characterised in terms of sets of syntactic and semantic postulates,
but can also be viewed from the perspective of conditional beliefs.
It is an extension of the formulation by Katsuno and Mendelzon  \citeyear{Katsuno-ea:91a} of 
AGM revision \cite{Alchourron-ea:85a}. 
To justify their proposal Darwiche and Pearl mount a comprehensive
argument. The argument includes a critique of natural revision,
which is shown to admit too few changes. In addition, they provide a
concrete revision operator which is shown to satisfy their
postulates. In many ways this can be seen as the prototypical
Darwiche-Pearl operator.  It is instructive to observe that the two
best-known operators satisfying the Darwiche-Pearl postulates,
natural revision and lexicographic revision, form the opposite
extremes of the Darwiche-Pearl framework: Natural revision is the
most conservative Darwiche-Pearl operator, in the sense that it 
effects as few changes as possible, while lexicographic
revision is the least conservative.

In this paper we show that the Darwiche-Pearl arguments lead
naturally to the acceptance of a smaller class of operators which we
refer to as {\em admissible}. We provide characterisations of
admissible revision, in terms of syntactic as well as semantic
postulates. Admissible revision ensures that the penultimate input
is not ignored completely. A consequence of this is that natural
revision is eliminated. On the other hand, admissible revision
includes the prototypical Darwiche-Pearl operator as well as
lexicographic revision, the latter result also showing that
lexicographic revision is the least conservative of the admissible
operators. The removal of natural revision from the scene leaves a
gap which is filled by the introduction of a new operator we refer
to as \emph{restrained revision}.  It is the most conservative of
admissible revision operators, and can thus be seen as an
appropriate replacement of natural revision.  We give a syntactic
and a semantic characterisation of restrained revision, and
demonstrate that it satisfies desirable properties. In particular,
and unlike lexicographic revision, it ensures that older information
is not discarded unnecessarily, and it shows that the problem of
temporal incoherence can be dealt with.

Although natural revision does not feature in the class of
admissible revision operators, we show that it still has a role to
play in iterated revision, provided it is first tempered
appropriately. We show that restrained revision can also be viewed
as a composite operator, consisting of natural revision preceded by
an application of a ``backwards revision'' operator previously
studied by Papini \citeyear{Papini:98b}.

The paper is organised as follows. After outlining some notation, we review the Darwiche-Pearl
framework in Section \ref{Sec:DP}. This is followed by a discussion of admissible revision in
Section \ref{Sec:admissible}. In Section \ref{Sec:restrained} we introduce restrained revision,
and in Section \ref{Sec:Composite} we show how it can be defined as a composite operator. 
Section \ref{Sec:Choice} discusses the possibility of enriching epistemic states as a way of
determining the appropriate admissible revision operator in a particular context. In this
section we also conclude and briefly discuss some future work.

\subsection{Notation}
We assume a finitely generated propositional language $L$ which includes the constants
$\top$ and $\bot$, is closed under the
usual propositional connectives, and is equipped with a classical
model-theoretic semantics.
$V$ is
the set of valuations of $L$ and $[\alpha]$ (or $[B]$) is the set of models of
$\alpha\in L$ (or $B\subseteq L)$. Classical entailment is denoted by $\vDash$ and logical equivalence
by $\equiv$. We also use $Cn$ to denote the operation of closure under classical entailment. 
Greek letters $\alpha, \beta, \ldots$ stand for arbitrary sentences. In our examples we 
sometimes use the lower case letters $p$, $q$, and $r$ as propositional atoms, and  sequences of 
0s and 1s to denote the valuations of the language. For example, $01$ denotes the valuation,
in a language generated by $p$ and $q$, in which $p$ is assigned the value 0 and $q$ the
value 1, while  $011$ denotes the valuation,
in a language generated by $p$, $q$ and $r$, in which $p$ is assigned the value 0 and both $q$ and $r$ the value 1. Whenever we use the term {\em knowledge base} we will always mean a set of 
sentences $X$ which is {\em deductively closed}, i.e., $X = Cn(X)$. 

\section{Darwiche-Pearl Revision}
\label{Sec:DP}%
Darwiche and Pearl \citeyear{Darwiche-ea:97a} reformulated the AGM
postulates \cite{Alchourron-ea:85a} to be compatible with their
suggested approach to iterated revision. This necessitated a move
from knowledge bases to \emph{epistemic states}. An epistemic state
contains, in addition to a knowledge base, all the information
needed for coherent reasoning including, in particular, the strategy
for belief revision which the agent wishes to employ at a given
time. Darwiche and Pearl consider epistemic states as abstract entities, and do not
provide a single formal representation. It is thus possible to talk about two epistemic states 
$\E$ and $\F$ being identical (denoted by $\E=\F$) , but yet syntactically different.\footnote{%
Personal communication with Adnan Darwiche.}
This has to be borne in mind below, particularly when considering postulate ($\E\ast 5)$.  
In Darwiche and Pearl's reformulated postulates $\ast$ is a belief change
operator on epistemic states, not knowledge bases.
We denote by $B(\E$) the knowledge base extracted from an epistemic state $\E$.
\begin{description}
\item [($\E\ast$1)] $B(\E\ast\alpha) =Cn(B(\E\ast\alpha))$
\item [($\E\ast$2)] $\alpha \in B(\E\ast\alpha)$
\item [($\E\ast$3)] $B(\E\ast\alpha)\subseteq B(\E)+\alpha $
\item [($\E\ast$4)] If $\lnot\alpha \notin B(\E)$ then $B(\E)+\alpha\subseteq B(\E\ast\alpha)$
\item [($\E\ast$5)] If $\E=\F$ and $\alpha \equiv \beta $ then $B(\E\ast\alpha) =B(\F\ast\beta)$
\item [($\E\ast$6)] $\bot \in B(\E\ast\alpha)$ iff $\vDash \lnot \alpha$
\item [($\E\ast$7)] $B(\E\ast(\alpha \wedge \beta ))\subseteq B(\E\ast\alpha)+\beta $
\item [($\E\ast$8)] If $\lnot \beta \notin B(\E\ast\alpha)$ then $B(\E\ast\alpha)+\beta\subseteq B(\E\ast(\alpha \wedge \beta))$
\end{description}
Darwiche and Pearl then show, via a representation result similar to that of Katsuno and Mendelzon
\citeyear{Katsuno-ea:91a},
that revision on epistemic states can be represented in terms of plausibility orderings associated 
with epistemic states.\footnote{%
Alternative frameworks for studying iterated revision, both based on using sequences of sentences rather than plausibility orderings, are those of Lehmann \citeyear{Lehmann} and 
Konieczny and Pino-P{\'e}rez \citeyear{KP}.}  
More specifically, every epistemic state $\E$ has associated with it a total 
preorder $\preceq_{\E}$ on all valuations,  with elements lower down in the ordering deemed more
plausible. Moreover, 
for any two epistemic states $\E$ and $\F$ which are identical (but may be syntactically different), 
it has to be the case that $\preceq_{\E}=\preceq_{\F}$. Let 
$\min(\alpha,\preceq_{\E})$ denote the minimal models of $\alpha$
under $\preceq_{\E}$. The knowledge base associated with the epistemic
state is obtained by considering the minimal models in
$\preceq_{\E}$ i.e., $[B(\E)] =\min(\top,\preceq_{\E})$. Observe that this means that 
$B(\E)$ has to be consistent.
This requirement enables us to obtain a unique knowledge base from the total preorder 
$\preceq_{\E}$. Preservation of the results in this paper when this requirement is relaxed is possible, but technically messy. 

The observant reader will note that our assumption of a consistent $B(\E)$ is incompatible with
a successful revision by $\bot$. This requires that we jettison ($\E\ast$6) and insist on consistent epistemic inputs only. (The left-to-right direction of ($\E\ast$6) 
 is rendered superfluous by ($\E\ast$1) and the
assumption that knowledge bases extracted from all epistemic states
have to be consistent.)
The other difference between the original AGM postulates and the Darwiche-Pearl reformulation
-- first inspired by a critical observation by Freund and Lehmann \citeyear{Freund-ea:94a} -- occurs in ($\E\ast$5), which  states that revising
by logically equivalent sentences results in epistemic states with identical associated knowledge bases.
This is a weakening of the original AGM postulate, phrased in our notation as follows:
\begin{description}
\item [($B\ast$5)] If $B(\E) = B(\F)$ and $\alpha \equiv \beta $ then $B(\E\ast\alpha) =B(\F\ast\beta)$
\end{description}
($B\ast$5) states that two epistemic states with identical associated {\em knowledge bases} will, after 
having been revised by equivalent inputs, produce two epistemic states with identical
associated knowledge bases. This is stronger than ($\E\ast$5) which requires equivalent 
associated knowledge bases only if the original {\em epistemic states} were identical.
We shall refer to the reformulated AGM
postulates, with ($\E\ast$6) removed,  as DP-AGM.
 
DP-AGM guarantees a unique extracted knowledge base when revision
by $\alpha$ is performed. It sets $[B(\E\ast\alpha)]$ equal to $\min(\alpha,\preceq_{\E})$
and thereby fixes the most plausible valuations in $\preceq_{\E\ast\alpha}$. 
However, it places no 
restriction on the rest of the ordering. The purpose of the Darwiche-Pearl framework
is to constrain this remaining part of the new ordering. It is done by way of a set of 
postulates for iterated revision \cite{Darwiche-ea:97a}.
(Throughout the paper we follow the convention that $\ast$ is left associative.) 
\begin{description}
\item[(C1)] If $\beta \vDash \alpha$ then $B(\E  \ast \alpha \ast
\beta) = B(\E \ast \beta)$

\item [(C2)] If $\beta \vDash \neg\alpha$ then $B(\E \ast \alpha \ast
\beta) = B(\E \ast \beta)$

\item [(C3)] If $\alpha\in B(\E \ast \beta)$ then $\alpha\in B(\E \ast
\alpha \ast \beta)$

\item [(C4)] If $\lnot\alpha\notin B(\E \ast \beta)$  then $\lnot\alpha\notin B(\E
\ast \alpha \ast \beta)$
\end{description}
\noindent 
The postulate (C1) states that when two pieces of
information---one more specific than the other---arrive, the first is made
redundant by the second.  (C2) says that when two contradictory
epistemic inputs arrive, the second one prevails;  the second evidence
alone yields the same knowledge base.  (C3)
says that a piece of evidence $\alpha$ should be retained after
accommodating more recent evidence $\beta$ that entails $\alpha$ given the
current knowledge base. (C4) simply says that no epistemic input can act
as its own defeater. We shall refer to the class of belief revision operators satisfying DP-AGM and (C1) to (C4) as \emph{DP-revision}.
The following are the corresponding semantic versions (with $v, w \in V$):
\begin{description}
\item[(CR1)]  If $v\in [\alpha], w\in [\alpha] $
then $v\preceq _{\E }w$ iff $v\preceq _{\E \ast \alpha }w$
\item[(CR2)]  If $v\in [\lnot \alpha], w\in
[\lnot \alpha] $ then $v\preceq _{\E }w$ iff $v\preceq _{\E \ast
\alpha }w$
\item[(CR3)]  If $v\in [\alpha], w\in [\lnot
\alpha]$ then $v\prec _{\E }w$ only if $v\prec _{\E \ast
\alpha }w$
\item[(CR4)]  If $v\in [\alpha], w\in [\lnot
\alpha] $ then $v\preceq _{\E }w$ only if $v\preceq _{\E \ast
\alpha }w$
\end{description}
(CR1) states that the relative ordering between $\alpha$-worlds remain unchanged following
an $\alpha$-revision, while (CR2) requires the same for $\lnot\alpha$-worlds. (CR3) requires 
that, for  an $\alpha$-world strictly more plausible than a $\lnot\alpha$-world, this relationship be 
retained after an $\alpha$-revision, and (CR4) requires the same for weak plausibility. 
Darwiche and Pearl showed that, given DP-AGM,  a precise correspondence obtains between
(C$i$) and (CR$i$) above ($i = 1,2,3,4$).

One of the guiding principles of belief revision is the principle of
minimal change: changes to a belief state ought to be kept to a
minimum. What is not always clear is what ought to be minimised. In
AGM theory the prevailing wisdom is that minimal change refers to
the sets of sentences corresponding to knowledge bases. But there
are other interpretations. With the move from knowledge bases to
epistemic states, minimal change can be defined in terms of the
fewest possible changes to the associated plausibility ordering
$\preceq_{\E}$. In what follows we will frequently have the
opportunity to refer to the latter interpretation of minimal change.
See also the discussion of this principle by Rott \citeyear{dogmas}.

\section{Admissible Revision}
\label{Sec:admissible}% 
In this section we consider two of the best-known
DP-operators, and propose three postulates to be added to the Darwiche-Pearl
framework. The first is more 
of a correction than a strengthening. We show that the Darwiche-Pearl representation of
the principle of the irrelevance of syntax is too weak and suggest 
an appropriate strengthened postulate. The second is suggested by some of the arguments
advanced by Darwiche and Pearl themselves. It eliminates one of the
operators they criticise, and is satisfied by the sole operator they
provide as an instance of their framework. The addition of these two postulates to the 
Darwiche-Pearl framework leads to the definition of the class of \emph{admissible} revision
operators. Finally, we point out a problem with Nayak's well-known lexicographic revision
operator and propose a third postulate to be added. The consequences of 
insisting on the addition of this third postulate are discussed in detail in Section \ref{Sec:restrained}.

%Rule (P) is the first of the new postulates we want to add to the Darwiche-Pearl framework. We now %turn to the second. 
As mentioned in Section \ref{Sec:DP}, Darwiche and Pearl replaced the original AGM postulate
($B\ast 5$) with ($\E\ast 5$). Both are attempts at an  appropriate formulation of the principle of  the irrelevance of  syntax, popularised by Dalal \citeyear{Dalal:88a}. But whereas ($B\ast 5$) has been shown to be too 
strong, as shown by Darwiche and Pearl \citeyear{Darwiche-ea:97a}, closer inspection reveals that  ($\E\ast 5$) is too {\em weak}. To be more precise, it fails as an adequate formulation of syntax irrelevance for \emph{iterated} revision. It specifies that revision by two equivalent sentences should produce epistemic states with identical associated knowledge bases, but does not require that these epistemic states, after another revision by two equivalent sentences, also have to produce epistemic states with identical associated knowledge bases.  So, as can be seen from the following example, under DP-AGM (and indeed, even if (C1) 
to (C4) are added) it is possible for $B(E\ast\alpha\ast\gamma)$ to differ from $B(E\ast\beta\ast\delta)$ even if  $\alpha$ is equivalent to 
$\beta$ and $\gamma$ is equivalent to $\delta$. 

\begin{example}
Consider a propositional language generated by the two atoms $p$ and $q$
and let $\E$ be an 
epistemic state such that $B(\E)=Cn(p\vee q)$. Now consider the two epistemic states 
$\E'$ and $\E''$  such that $B(\E')=B(\E'')=Cn(p)$, $01\prec_{\E'}00$ and $00\prec_{\E''}01$.
Observe that this gives complete descriptions of $\preceq_{\E'}$ and $\preceq_{\E''}$. It is tedious, but not difficult, 
to verify that setting $\preceq_{\E\ast p}=\preceq_{\E'}$ and $\preceq_{\E\ast\lnot\lnot p}=\preceq_{\E''}$ is compatible with DP-AGM.
But observe then that $B(\E\ast p\ast\lnot p)=Cn(\lnot p\wedge q)$, while $B(\E\ast\lnot\lnot p\ast\lnot p)=Cn(\lnot p\wedge\lnot q)$.    
\end{example}

\noindent As a consequence of this, we propose that ($\E\ast$5) be 
replaced by the following postulate:
\begin{description}
\item[($\E\ast$5$'$)] If $\E=\F$, $\alpha\equiv\beta$ and $\delta\equiv\gamma$ then 
$B(\E\ast\alpha\ast\gamma)=B(\F\ast\beta\ast\delta)$
\end{description}
The semantic equivalent of ($\E\ast$5$'$) looks like this:
\begin{description}
\item [($\ER\ast$5$'$)] If $\E=\F$ and $\alpha \equiv \beta $ then $\preceq_{\E\ast\alpha} =\preceq_{\F\ast\beta}$
\end{description}
($\ER\ast$5$'$) states that the revision of two identical epistemic states by two equivalent sentences 
has to result in epistemic states with identical associated total preorders, not just in epistemic states with identical 
associated knowledge bases.
\begin{proposition}
($\E\ast$5$'$) and  ($\ER\ast$5$'$) are equivalent, given DP-AGM.  
\end{proposition}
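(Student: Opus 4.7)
The plan is to route both directions through the Katsuno--Mendelzon-style reading of DP-AGM, namely the fact that for any epistemic state $\E$ and any consistent input $\mu$ we have $[B(\E\ast\mu)] = \min(\mu,\preceq_{\E})$, together with the fact that the preorder $\preceq_\E$ is well-defined on $\E$ (identical epistemic states produce identical preorders, by the representation Darwiche and Pearl give). Each postulate is then converted into a statement about minima of preorders, and the two directions become short.

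For $(\ER\ast 5') \Rightarrow (\E\ast 5')$: assume $\E=\F$, $\alpha\equiv\beta$ and $\gamma\equiv\delta$. Apply $(\ER\ast 5')$ to the first pair of inputs to obtain $\preceq_{\E\ast\alpha}=\preceq_{\F\ast\beta}$. Since $[\gamma]=[\delta]$, the KM reading applied to the second revision gives $[B(\E\ast\alpha\ast\gamma)] = \min(\gamma,\preceq_{\E\ast\alpha}) = \min(\delta,\preceq_{\F\ast\beta}) = [B(\F\ast\beta\ast\delta)]$, and deductive closure yields the desired equality. For the converse $(\E\ast 5') \Rightarrow (\ER\ast 5')$, assume $\E=\F$ and $\alpha\equiv\beta$, and pick arbitrary valuations $v,w\in V$. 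Because $L$ is finitely generated there is a sentence $\gamma_{vw}$ with $[\gamma_{vw}]=\{v,w\}$. Instantiating $(\E\ast 5')$ with $\gamma=\delta=\gamma_{vw}$ gives $B(\E\ast\alpha\ast\gamma_{vw}) = B(\F\ast\beta\ast\gamma_{vw})$, hence $\min(\gamma_{vw},\preceq_{\E\ast\alpha}) = \min(\gamma_{vw},\preceq_{\F\ast\beta})$ by KM. But $v\preceq_{\E\ast\alpha} w$ is equivalent to $v\in\min(\gamma_{vw},\preceq_{\E\ast\alpha})$, and similarly for $\F$, so the two preorders agree on the pair $\{v,w\}$; since $v,w$ were arbitrary, $\preceq_{\E\ast\alpha}=\preceq_{\F\ast\beta}$.

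The only really delicate point, and the one I would be most careful to flag, is the invocation of the KM representation on the \emph{revised} epistemic state $\E\ast\alpha$ (and on $\F\ast\beta$): it relies on the fact that DP-AGM is maintained at every stage of an iteration, so that $[B(\E\ast\alpha\ast\mu)]$ really is $\min(\mu,\preceq_{\E\ast\alpha})$ for any $\mu$. Granted that, the $\gamma_{vw}$-trick in the converse direction is the sole non-routine step, and it depends crucially on the richness of $L$ guaranteed in the Notation subsection. Everything else is bookkeeping.
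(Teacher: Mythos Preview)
Your proof is correct and follows essentially the same approach as the paper: both directions hinge on the Katsuno--Mendelzon identity $[B(\E\ast\mu)]=\min(\mu,\preceq_\E)$, and the non-trivial converse uses a sentence whose models are exactly a chosen pair $\{v,w\}$ to translate equality of knowledge bases into agreement of the preorders on that pair. The only cosmetic difference is that the paper argues the converse by contrapositive (exhibiting a pair where the preorders differ and deriving a failure of $(\E\ast 5')$), whereas you argue it directly for an arbitrary pair; the underlying ``two-world sentence'' trick is identical.
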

\begin{proof} 
The proof that ($\E\ast$5$'$) follows from ($\ER\ast$5$'$) is straightfoward. For the converse, 
suppose that ($\ER\ast$5$'$) does not hold; i.e. 
$\alpha\equiv\beta$ and $\preceq_{\F\ast\alpha}\neq\preceq_{\E\ast\beta}$
for some $\alpha$ and $\beta$.
This means 
there exist $x,y\in V$ such that $x\preceq_{\E\ast\alpha}y$ but $y\prec_{\F\ast\beta}x$.
Now let $\gamma$ be such that $[\gamma]=\{x,y\}$. Then $x\in[B(\E\ast\alpha\ast\gamma)]$, 
but  $[B(\F\ast\beta\ast\gamma)]=\{y\}$, and so 
$B(\E\ast\alpha\ast\gamma)\neq B(\F\ast\beta\ast\gamma)$; a violation of ($\E\ast$5$'$).
\end{proof}

\noindent From this it should already 
be clear that ($\E\ast$5$'$) is a desirable property. This 
view is bolstered further by observing that all the well-known iterated revision operators satisfy it;
natural revision, the Darwiche-Pearl operator $\bullet$, and Nayak's  lexicographic revision, 
the first and third of which are to be discussed in detail below.
In fact, we conjecture that Darwiche and Pearl's intention was to replace ($B\ast$5) with ($\E\ast$5$'$), not with ($\E\ast$5) and propose this as a permanent replacement.
\begin{definition}
The set of postulates obtained by replacing ($\E\ast$5) with ($\E\ast$5$'$) in DP-AGM is
defined as RAGM.
\end{definition}
Observe that RAGM, like DP-AGM, guarantees that $[B(\E\ast\alpha)]=\min(\alpha,\preceq_\E)$.

Rule ($\E\ast$5$'$) is the first of the new postulates we want to add to the Darwiche-Pearl
framework. We now lead up to the second.
One of the oldest known DP-operators is
{\em natural revision}, usually credited to Boutilier \citeyear{Boutilier:93a,Boutilier:96a}, although
the idea can also be found in \cite{Spohn:88a}.
Its main feature is the application of the principle of minimal
change to epistemic states.  It is characterised by DP-AGM plus the following postulate:
\begin{description}
\item[(CB)] If $\lnot\beta\in B(\E\ast\alpha)$ then $B(\E\ast\alpha\ast\beta) = B(\E\ast\beta)$
\end{description}
(CB) requires that, whenever $B(\E\ast\alpha)$ is inconsistent with $\beta$, revising $\E\ast\alpha$
with $\beta$ will completely ignore the revision by $\alpha$.
Its semantic counterpart is as follows:
\begin{description}
\item[(CBR)] For $v,w\notin [B(\E\ast\alpha)]$, $v\preceq_{\E\ast\alpha} w$ iff $v\preceq_{\E} w$
\end{description}
As shown by Darwiche and Pearl \citeyear{Darwiche-ea:97a}, natural revision minimises changes in \emph{conditional beliefs}, 
with $\beta\mid\alpha$ being a conditional belief of an epistemic state $\E$ iff 
$\beta\in B(\E*\alpha)$. In fact, Darwiche and Pearl show (Lemma 1, p. 7), that keeping 
$\preceq_\E$ and $\preceq_{\E*\alpha}$ as similar as possible has the effect of minimising the changes in 
conditional beliefs to a revision. So, from (CBR) it is clear that natural revision is an application of
minimal change to epistemic states. It requires that,  barring the
changes mandated by DP-AGM, the relative ordering of valuations
remains unchanged, thus keeping $\preceq_{\E*\alpha}$ as similar as possible to 
$\preceq_\E$. In that sense then, natural revision is the most conservative of
\emph{all} DP-operators. Such a strict adherence to minimal change
is inadvisable and needs to be tempered appropriately, an issue that
will be addressed in  Section \ref{Sec:Composite}. Darwiche and
Pearl have shown that (CB) is too strong, and that natural revision
is not all that natural, sometimes yielding counterintuitive
results.
\begin{example}
\label{Ex:red-bird}%
\cite{Darwiche-ea:97a} We encounter a strange animal and it appears
to be a bird, so we believe it is one. As it comes closer, we see
clearly that the animal is red, so we believe it is a red bird. To
remove further doubts we call in a bird expert who examines it and
concludes that it is not a bird, but some sort of animal. Should we
still believe the animal is red? (CB) tells us we should no longer
believe it is red. This can be seen by substituting $B(\E) =
Cn(\lnot\beta)=Cn(\mathtt{bird})$ and $\alpha\equiv \mathtt{red}$ in (CB), instructing
us to totally ignore the observation $\alpha$ as if it had never
taken place.
\end{example}
Given Example \ref{Ex:red-bird}, it is perhaps surprising that Darwiche and Pearl never considered
postulate (P) below.
In this example, the argument for retaining the belief that the creature is red hinges
 upon the assumption that being red is not in conflict with the newly obtained information
 that it is a kind of animal. That is, because learning that the creature is an animal will not  
 automatically disqualify it from being red, it is reasonable to retain the belief that it is red.
 More generally then, whenever $\alpha$ is
consistent with a revision by $\beta$, it should be retained if an
$\alpha$-revision is inserted just before the $\beta$-revision.
 \begin{description}
\item[(P)] If $\lnot\alpha\notin B(\E\ast\beta)$ then $\alpha\in B(\E\ast\alpha\ast\beta)$
\end{description}
Applying (P) to Example \ref{Ex:red-bird} we see that, if $\mathtt{red}$
is consistent with $B(\E\ast\lnot\mathtt{bird})$, we have ${\mathtt red}\in B(\E\ast
\mathtt{red}\ast\lnot\mathtt{bird})$. Put differently, (P) requires that you retain your belief in the animal's redness, 
provided this would not have been precluded if the observation about it being red had never occurred. 
(P) was also proposed independently of the present paper by Jin and Thielscher \citeyear{Jin-ea:2005a} where it is
named \emph{Independence}.
The semantic counterpart of
(P) looks like this:
\begin{description}
\item[(PR)] For $v\in [\alpha]$ and $w\in [\lnot\alpha]$, if $v\preceq_{\E} w$ then
$v\prec_{\E\ast\alpha} w$
\end{description}
(PR) requires an $\alpha$-world $v$ that is at least as plausible as a $\lnot\alpha$-world $w$
to be strictly more plausible than $w$ after an $\alpha$-revision. The following result was
also proved independently by Jin and Thielscher \citeyear{Jin-ea:2005a}.
\begin{proposition}
\label{Prop:P-PR}%
If $\ast$ satisfies DP-AGM, then it satisfies (P) iff it also satisfies (PR).
\end{proposition}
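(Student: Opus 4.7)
The plan is to prove both directions by unpacking the semantic characterisation guaranteed by DP-AGM, namely $[B(\E\ast\gamma)] = \min(\gamma,\preceq_\E)$ and $[B(\E\ast\alpha\ast\beta)] = \min(\beta,\preceq_{\E\ast\alpha})$. Since the language is finitely generated, every non-empty set of valuations is the set of models of some sentence; this freedom to choose a test sentence $\beta$ is what links the two forms.

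For (PR) $\Rightarrow$ (P), I would suppose $\lnot\alpha \notin B(\E\ast\beta)$ and pick some $v \in [\alpha]\cap\min(\beta,\preceq_\E)$. To show $\alpha \in B(\E\ast\alpha\ast\beta)$, suppose for contradiction that some $w \in \min(\beta,\preceq_{\E\ast\alpha})$ satisfies $\lnot\alpha$. Because $v$ is $\preceq_\E$-minimal among $\beta$-worlds and $w \in [\beta]$, we have $v \preceq_\E w$; (PR) then yields $v \prec_{\E\ast\alpha} w$, contradicting the $\preceq_{\E\ast\alpha}$-minimality of $w$ in $[\beta]$ (since $v$ is also a $\beta$-world).

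For (P) $\Rightarrow$ (PR), I would take $v \in [\alpha]$ and $w \in [\lnot\alpha]$ with $v \preceq_\E w$, and choose $\beta$ with $[\beta] = \{v,w\}$. Then $\min(\beta,\preceq_\E) \subseteq \{v\}$ actually may equal $\{v\}$ or $\{v,w\}$ depending on whether $w \preceq_\E v$; in either case $v$ is in it, so $\lnot\alpha \notin B(\E\ast\beta)$. By (P), $\alpha \in B(\E\ast\alpha\ast\beta)$, so $\min(\beta,\preceq_{\E\ast\alpha}) \subseteq [\alpha]$, forcing $w \notin \min(\beta,\preceq_{\E\ast\alpha})$. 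Since $[\beta] = \{v,w\}$, this gives $v \prec_{\E\ast\alpha} w$ as desired.

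There is no serious obstacle: both directions are essentially a one-step translation between the syntactic and semantic sides via the representation $[B(\E\ast\gamma)] = \min(\gamma,\preceq_\E)$. The only subtle point, which I want to be explicit about, is the use of a custom test sentence $\beta$ with $[\beta] = \{v,w\}$ in the $(P) \Rightarrow (PR)$ direction; this relies on the finitely-generated propositional setting declared in the Notation subsection, and is the analogue of the standard trick used in proving the correspondences between (C$i$) and (CR$i$) in Darwiche and Pearl's representation result.
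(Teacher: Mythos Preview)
Your proposal is correct and follows essentially the same approach as the paper's own proof: both directions use the semantic representation $[B(\E\ast\gamma)] = \min(\gamma,\preceq_\E)$, with the $(P)\Rightarrow(PR)$ direction proceeding via the test sentence $\beta$ with $[\beta]=\{v,w\}$, and the $(PR)\Rightarrow(P)$ direction via a contradiction argument applying (PR) to a witness $v\in[\alpha]\cap\min(\beta,\preceq_\E)$ and a hypothetical $\lnot\alpha$-world in $\min(\beta,\preceq_{\E\ast\alpha})$. One small wording slip: in the $(P)\Rightarrow(PR)$ direction you write ``$\min(\beta,\preceq_\E) \subseteq \{v\}$ actually may equal $\{v\}$ or $\{v,w\}$'', which is self-contradictory as stated; you clearly mean simply that the minimum is $\{v\}$ or $\{v,w\}$, and either way $v$ belongs to it --- the argument itself is fine.
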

\begin{proof}
For (P)$\Rightarrow$(PR), let $v\in[\alpha]$, $w\in[\lnot\alpha]$, $v\preceq_\E w$, 
and let $\beta$ be such that $[\beta]=\{v,w\}$. This means that $\lnot\alpha\notin B(\E*\beta)$
(since $[B(\E*\beta)]$ is either equal to $\{v\}$ or to $\{v,w\}$), and so, by (P), 
$\alpha\in B(\E*\alpha *\beta)$. And therefore $v\prec_{\E*\alpha}w$, for if not, 
we would have that $w\preceq_{\E*\alpha}v$, from which it follows 
that $w\in[B(\E*\alpha *\beta)]$, and so $\alpha\notin B(\E*\alpha *\beta)]$. 

For (P)$\Leftarrow$(PR), suppose that $\lnot\alpha\notin B(\E*\beta)$. This means 
there is a $v\in[\alpha]\cap[B(\E*\beta)]$; that is, $v\preceq_\E w$ for every $w\in[\beta]$. 
And this means that $\alpha\in B(\E*\alpha *\beta)$. For if not, it means there is an $x$ in 
$[\lnot\alpha]\cap [B(\E*\alpha *\beta)]$. Now, since $x\in[B(\E*\alpha *\beta)]$, it follows from DP-AGM
that $x\preceq_{\E*\alpha}w$ for every $w\in[\beta]$, and so $x\preceq_{\E*\alpha} v$ (since 
$v\in [B(\E\ast\beta)]\subseteq[\beta])$. 
But it also follows from DP-AGM that  $x\in[\beta]$, and therefore that $v\preceq_\E x$, and by (PR) it 
then follows that $v\prec_{\E*\alpha} x$; a contradiction.
\end{proof}

\noindent
Rule (PR) {\em enforces} certain changes in the ordering
$\preceq_{\E}$ after receipt of $\alpha$. 
In fact as soon as there
exist an $\alpha$-world $v$ and a $\neg\alpha$-world $w$ on the same
plausibility level somewhere in $\preceq_{\E}$ (in that both $v \preceq_\E w$ and $w \preceq_\E v$), (PR) implies
$\preceq_{\E*\alpha} \neq \preceq_{\E}$. 
Furthermore these changes
must also occur even when  $\alpha$ is already believed in $\E$ to
begin with, i.e., $\alpha \in B(\E)$. (Although of course if $\alpha
\in B(\E)$ then $B(\E * \alpha) = B(\E)$, i.e., the {\em knowledge base}
associated to $\E$ will remain unchanged -- this follows from DP-AGM.)
The rules (P)/(PR) ensure input $\alpha$ is believed with a certain
{\em minimal strength} of belief -- enough to help it survive the
{\em next} revision. The point that being informed of $\alpha$ can lead to an
increase in the strength of an agent's belief in $\alpha$, even in
cases where the agent already believes $\alpha$ to begin with, has
been made before, e.g., by Friedman and Halpern \citeyear[p.405]{Friedman-ea:99a}.
Note that (P) has the antecedent of (C4) and the consequent of (C3).
In fact, (P) is stronger than (C3) and (C4) combined. This is easily seen from
the semantic counterparts of these postulates. 
It also
follows that the only concrete example of an iterated revision
operator provided by Darwiche and Pearl, the operator they refer to
as $\bullet$ and which employs a form of Spohnian conditioning
\cite{Spohn:88a}, satisfies (PR), and therefore (P) as well.
Furthermore, by adopting (P) we explicitly exclude natural revision
as a permissible operator. So accepting (P) is a move towards the
viewpoint that information obtained before the latest input ought
not to be discarded unnecessarily.

Based on the analysis of this section we propose a strengthening of  the Darwiche-Pearl framework in which ($\E\ast$5) is replaced by ($\E\ast$5$'$) and
(C3) and (C4) are replaced by (P). 
\begin{definition}
A revision operator is \emph{admissible} iff it satisfies RAGM, (C1), (C2),
and (P).
\end{definition}
Inasmuch as  the Darwiche-Pearl framework can be visualised as one
in which $\alpha$-worlds slide ``downwards'' relative to
$\lnot\alpha$-worlds, admissible revision  ensures, via (PR), that this
``downwards'' slide is a strict one.

%\subsection{[[Heading?]]}
We now pave the way for the third postulate we would like to add in this paper to the Darwiche-Pearl framework. 
To begin with, note that another view of (P) is that it is a significant weakening of the
following property, first introduced by Nayak et al. \citeyear{Nayak-ea:96b}:
\begin{description}
\item[(Recalcitrance)] If $\lnot\alpha\notin Cn(\beta)$ then $\alpha\in B(\E\ast\alpha\ast\beta)$
\end{description}
Semantically, (Recalcitrance) corresponds to the following property, as was pointed out by
Booth \citeyear{Booth:2005a} and implicitly contained in the work of Nayak et al. \citeyear {Nayak-ea:2003a}:
\begin{description}
\item[(R)] For $v\in [\alpha]$, $w\in [\lnot\alpha]$, $v\prec_{\E\ast\alpha} w$
\end{description}
(Recalcitrance) is a property of the {\em lexicographic revision} operator, 
the second of the well-known DP-operators we consider, and one that is just as old as natural
revision. It was first introduced by Nayak \citeyear{Nayak:93a} and  has
been studied most notably by Nayak et al. \citeyear{Nayak:94b,Nayak-ea:2003a}, although, as with natural
revision, the idea actually dates back to Spohn \citeyear{Spohn:88a}.
In fact, lexicographic revision is characterised by DP-AGM (and also RAGM) together with (C1), (C2) and (Recalcitrance), a result that is
easily proved from the semantic counterparts of these properties and Nayak et al.'s semantic
characterisation of lexicographic revision in \citeyear{Nayak-ea:2003a}. Informally, lexicographic revision takes the assumption 
of ``most recent is best'', on which the Success postulate $(\E\ast 2)$ is based, and adds to it the assumption of temporal coherence. In combination, this leads to the stronger assumption that ``\emph{more} recent is better''.

An analysis of
the semantic characterisation of lexicographic revision shows that it is the \emph{least} conservative
of the DP-operators, in the sense that it effects the most changes in the relative ordering of valuations permitted by
DP-AGM (or RAGM for that matter) and the Darwiche-Pearl postulates. Since it is also an admissible revision operator, it follows that it is also the least conservative admissible operator.

The problem with (Recalcitrance) is that the decision of whether to accept $\beta$ after
a subsequent revision by $\alpha$ is completely determined by the logical relationship between $\beta$ and $\alpha$ -- the
epistemic state $\E$ is robbed of all influence. The replacement of (Recalcitrance)
by the weaker (P) already gives $\E$ more influence in the outcome. What we will do shortly is
constrain matters further by giving $\E$ as much influence as allowed by the postulates for
admissible revision. Such a move ensures greater sensitivity to the agent's epistemic record in making further changes.

Note that lexicographic revision assumes that more recent
information takes complete precedence over information obtained
previously. Thus, when applied to Example \ref{Ex:red-bird}, it
requires us to believe that the animal, previously assumed to be a
bird, is indeed red, because ${\mathtt red}$ is a recent input which does not
conflict with the most recently obtained input. While this is a
reasonable approach in many circumstances, a dogmatic adherence to
it can be problematic, as the following example shows.
\begin{example}
\label{Ex:red-bird-2} While holidaying in a wildlife park we observe a creature which is clearly red,
but we are too far away to determine whether it is a  bird or a land
animal. So we adopt the knowledge base $B(\E)= Cn(\mathtt{red})$. Next to us
is a person with knowledge of the local area who declares that, since the creature is red, it is a 
a bird. We have no reason to doubt him, and so we adopt the belief
$\mathtt{red\rightarrow bird}$. Now the creature moves closer and it becomes
clear that it is not a bird. The question is, should we continue
believing that it is red? Under the circumstances described above we
want our initial observation to take precedence, and believe that the animal is red. But
lexicographic revision does not allow us to do so.
\end{example}
Other examples along similar lines speaking against a rigid acceptance of (Recalcitrance) are those
of Glaister \citeyear[p.31]{Glaister:98a} and Jin and Thielscher \citeyear[p.482]{Jin-ea:2005a}.

While (P) allows for the possibility of retaining the belief that the animal is red, it
does not \emph{enforce} this belief.  The rest of this section is devoted to the 
discussion of a property which does so.
To help us express this property, we introduce an extra piece of terminology
and notation.

%\subsection{The counteracts relation}

%We make the following definition.

\begin{definition}
$\alpha$ and $\beta$ \emph{counteract} with respect to an epistemic state $\E$, written
$\alpha \counteracts_{\E} \beta$, iff $\lnot\beta\in B(\E\ast\alpha)$ and $\lnot\alpha\in B(\E\ast\beta)$.
\end{definition}
The use of the term {\em counteract} to describe this relation is
taken from Nayak et al. \citeyear{Nayak-ea:2003a}. \mbox{$\alpha \counteracts_{\E}
\beta$} means that, from the viewpoint of $\E$, $\alpha$ and $\beta$
tend to ``exclude'' each other. We will now discuss a few properties of this relation. First note that $\counteracts_{\E}$ depends only on the total preorder $\preceq_{\E}$
obtained from $\E$. Indeed we have $\alpha \counteracts_{\E} \beta$ iff both $\min(\alpha, \preceq_\E) \subseteq
[\neg\beta]$ and $\min(\beta, \preceq_\E) \subseteq [\neg\alpha]$. This in turn can be reformulated in the following
way, which provides a useful aid to visualise a counteracts relation: 

\begin{proposition}
\label{counter}
$\alpha \counteracts_\E \beta$ iff there exist $v \in [\alpha]$, $w \in [\beta]$ such that both $v \prec_\E x$ and
$w \prec_\E x$ for all $x \in \min(\alpha \wedge \beta, \preceq_\E)$.
\end{proposition}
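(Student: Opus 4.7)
The plan is to reduce both sides of the biconditional to purely order-theoretic statements about $\preceq_\E$, using the fact (already noted in the paper for RAGM/DP-AGM) that $[B(\E \ast \alpha)] = \min(\alpha, \preceq_\E)$. Under this reformulation, $\alpha \counteracts_\E \beta$ is the same as the pair of inclusions $\min(\alpha, \preceq_\E) \subseteq [\lnot\beta]$ and $\min(\beta, \preceq_\E) \subseteq [\lnot\alpha]$, equivalently $\min(\alpha, \preceq_\E) \cap [\beta] = \emptyset$ and $\min(\beta, \preceq_\E) \cap [\alpha] = \emptyset$. The proposition will then be pure reasoning about minima in a total preorder.

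For the forward direction, I would pick any $v \in \min(\alpha, \preceq_\E)$ and any $w \in \min(\beta, \preceq_\E)$ (these exist as inputs are consistent), and then verify the universal condition for an arbitrary $x \in \min(\alpha \wedge \beta, \preceq_\E)$. Since $x \in [\alpha]$, minimality of $v$ for $\alpha$ yields $v \preceq_\E x$. To upgrade this to the strict inequality $v \prec_\E x$, I use that $x \in [\beta]$ together with the counteracts hypothesis $\min(\alpha, \preceq_\E) \cap [\beta] = \emptyset$, giving $x \notin \min(\alpha, \preceq_\E)$, so there must be some $y \in [\alpha]$ with $y \prec_\E x$, and totality then forces $v \preceq_\E y \prec_\E x$. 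The symmetric argument on $\beta$ yields $w \prec_\E x$.

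For the converse, I would argue by contradiction. Assume the right-hand side but also that (say) $\min(\alpha, \preceq_\E) \cap [\beta] \neq \emptyset$, and pick $y$ in this intersection. The key small observation is that $y$ belongs to $\min(\alpha \wedge \beta, \preceq_\E)$: indeed $y \in [\alpha \wedge \beta]$, and for any $z \in [\alpha \wedge \beta] \subseteq [\alpha]$ we have $y \preceq_\E z$ by minimality of $y$ within $[\alpha]$. Applying the hypothesised property at this $x = y$ gives $v \prec_\E y$ with $v \in [\alpha]$, contradicting $y \in \min(\alpha, \preceq_\E)$. The symmetric contradiction handles the other inclusion.

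The only mild subtlety worth flagging is the degenerate case when $[\alpha \wedge \beta] = \emptyset$, where the universal quantifier is vacuous and $\min(\alpha \wedge \beta, \preceq_\E)$ is empty; here both sides are trivially true (any $v \in [\alpha]$ and $w \in [\beta]$ suffice, and $\min(\alpha, \preceq_\E) \cap [\beta] \subseteq [\alpha \wedge \beta] = \emptyset$, likewise symmetrically), so the argument above can be stated to cover it uniformly. The main obstacle is not depth but bookkeeping: making sure that every weak inequality coming from minimality is promoted to a strict one at exactly the right spot, which is why totality of $\preceq_\E$ has to be invoked explicitly in the forward direction.
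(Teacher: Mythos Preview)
Your argument is correct and rests on the same order-theoretic observations as the paper's proof. The only structural difference is that the paper factors the biconditional into two independent equivalences, showing $\min(\alpha,\preceq_\E)\subseteq[\neg(\alpha\wedge\beta)]$ iff some $v\in[\alpha]$ is strictly below $\min(\alpha\wedge\beta,\preceq_\E)$, and likewise for $\beta$, whereas you prove the combined forward and backward directions directly; the underlying reasoning (pick a minimal element, use totality to promote weak to strict) is identical.
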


\begin{proof}
First note that, since obviously $\min(\alpha, \preceq_\E) \subseteq [\alpha]$, $\min(\alpha, \preceq_\E) \subseteq [\neg\beta]$ may be rewritten as $\min(\alpha, \preceq_\E) \subseteq [\neg(\alpha \wedge \beta)]$. Using the fact that
$\preceq_\E$ is a total preorder, it is easy to see that
this can hold
iff there exists $v \in [\alpha]$ such that $v \prec_\E x$ for all $x \in \min(\alpha \wedge \beta, \preceq_\E)$. In the
same way we may rewrite $\min(\beta, \preceq_\E) \subseteq [\neg\alpha]$ as $\min(\beta, \preceq_\E) \subseteq [\neg(\alpha \wedge \beta)]$, which is then equivalent to saying
there exists $w \in [\neg\beta]$ such that $w \prec_\E x$ for all $x \in \min(\alpha \wedge \beta, \preceq_\E)$.    
\end{proof}

\noindent 
In other words, Proposition \ref{counter} says $\alpha \counteracts_\E \beta$ iff there exist {\em both} an $\alpha$-world
and a $\beta$-world which are strictly more plausible than the most plausible $(\alpha \wedge \beta)$-worlds.
Other immediate things to note about $\counteracts_\E$ are that it is symmetric, and that it is syntax-independent, i.e.,
if $\alpha \counteracts_\E \beta$ and $\beta \equiv \beta'$ then $\alpha \counteracts_\E \beta'$. Furthermore if $\alpha$ 
and $\beta$ are
logically inconsistent with each other then $\alpha \counteracts_{\E} \beta$, but the
converse need not hold (see the short example after the next proposition for confirmation). Thus $\counteracts_{\E}$ can 
be seen as a weak form of inconsistency. The next result gives two more properties of $\counteracts_\E$:

\begin{proposition}
\label{counteractsprops}
Given RAGM, the following properties hold for $\counteracts_\E$: 
\\
{\em (i)}
If $\alpha \counteracts_{\E} \beta$ and $\gamma \counteracts_{\E} \beta$ then $(\alpha \vee \gamma) \counteracts_{\E} 
\beta$ \\
{\em (ii)}
If $\alpha \not\counteracts_\E \beta$ and $\gamma \ncounteracts_{\E} \beta$ then $(\alpha \vee \gamma) \not\counteracts_{\E} 
\beta$
\end{proposition}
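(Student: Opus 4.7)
My plan is to work entirely with the semantic reformulation given just before the proposition, namely: $\alpha \counteracts_\E \beta$ iff $\min(\alpha,\preceq_\E) \subseteq [\neg\beta]$ and $\min(\beta,\preceq_\E) \subseteq [\neg\alpha]$. RAGM is what guarantees that this characterisation via the plausibility preorder $\preceq_\E$ is available, since $[B(\E\ast\alpha)] = \min(\alpha,\preceq_\E)$ follows from it; I would state that once at the top and then work purely with preorders.

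For (i), I would argue directly. Suppose $\alpha \counteracts_\E \beta$ and $\gamma \counteracts_\E \beta$. The key observation is that every minimal $(\alpha \vee \gamma)$-world lies either in $\min(\alpha,\preceq_\E)$ or in $\min(\gamma,\preceq_\E)$: if $w \in \min(\alpha\vee\gamma,\preceq_\E)$ satisfies, say, $\alpha$, then for every $v \in [\alpha]$ we have $v \in [\alpha\vee\gamma]$, so $w \preceq_\E v$. Hence $\min(\alpha\vee\gamma,\preceq_\E) \subseteq \min(\alpha,\preceq_\E) \cup \min(\gamma,\preceq_\E) \subseteq [\neg\beta]$. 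For the other half, $\min(\beta,\preceq_\E) \subseteq [\neg\alpha] \cap [\neg\gamma] = [\neg(\alpha\vee\gamma)]$.

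Part (ii) is the interesting one and this is where I expect the main (mild) obstacle. I would proceed by contradiction: assume $\alpha \ncounteracts_\E \beta$ and $\gamma \ncounteracts_\E \beta$ yet $(\alpha\vee\gamma) \counteracts_\E \beta$. From the second clause of $(\alpha\vee\gamma) \counteracts_\E \beta$ we get $\min(\beta,\preceq_\E) \subseteq [\neg\alpha]$ and $\min(\beta,\preceq_\E) \subseteq [\neg\gamma]$. Now $\alpha \ncounteracts_\E \beta$ means one of two disjuncts fails, but the disjunct ``$\min(\beta,\preceq_\E) \subseteq [\neg\alpha]$'' has just been shown to hold; so the other must fail, giving a witness $v_\alpha \in \min(\alpha,\preceq_\E) \cap [\beta]$. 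Symmetrically we get $v_\gamma \in \min(\gamma,\preceq_\E) \cap [\beta]$. This forced extraction of witnesses of a specific \emph{form} is the crux of the argument.

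To finish, take the $\preceq_\E$-smaller of $v_\alpha, v_\gamma$ (WLOG $v_\alpha \preceq_\E v_\gamma$, using that $\preceq_\E$ is total). I claim $v_\alpha \in \min(\alpha\vee\gamma,\preceq_\E)$: any $w \in [\alpha\vee\gamma]$ is either an $\alpha$-world, so $v_\alpha \preceq_\E w$ by minimality of $v_\alpha$ in $[\alpha]$, or a $\gamma$-world, so $v_\gamma \preceq_\E w$ and hence $v_\alpha \preceq_\E w$ by transitivity. Since $v_\alpha \in [\beta]$, this contradicts $\min(\alpha\vee\gamma,\preceq_\E) \subseteq [\neg\beta]$, finishing the proof.
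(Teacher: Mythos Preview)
Your proof is correct. The approach differs from the paper's, though the underlying facts are the same dressed differently. The paper argues syntactically from the definition $\alpha \counteracts_\E \beta$ iff $\neg\beta \in B(\E*\alpha)$ and $\neg\alpha \in B(\E*\beta)$, invoking two RAGM-derived properties about disjunctions: $B(\E*\lambda) \cap B(\E*\chi) \subseteq B(\E*(\lambda\vee\chi))$ for part (i), and $B(\E*(\lambda\vee\chi)) \subseteq B(\E*\lambda) \cup B(\E*\chi)$ for part (ii). You instead translate everything into the preorder $\preceq_\E$ via the equivalent characterisation stated just before the proposition, and argue directly with minimal worlds. Your inclusion $\min(\alpha\vee\gamma,\preceq_\E) \subseteq \min(\alpha,\preceq_\E) \cup \min(\gamma,\preceq_\E)$ is precisely the semantic content of the paper's second RAGM property, and your ``take the $\preceq_\E$-smaller of the two witnesses'' step in (ii) corresponds to the paper's use of the first. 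What your route buys is self-containment: you never need to name or cite the two disjunction facts as separate RAGM consequences, since they fall out of the preorder manipulation. What the paper's route buys is that it stays closer to the postulational spirit and makes explicit which RAGM consequences are doing the work.
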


\begin{proof}
{\em (i)} 
Suppose $\alpha \counteracts_{\E} \beta$ and $\gamma \counteracts_{\E} \beta$. To show $(\alpha \vee \gamma) \counteracts_{\E} 
\beta$ we need to show both $\neg\beta \in B(\E * (\alpha \vee \gamma))$ and $\neg(\alpha \vee \gamma) \in B(\E * \beta)$. For the former we already have both $\neg\beta \in B(\E * \alpha)$ and $\neg\beta \in B(\E * \gamma)$ from $\alpha \counteracts_{\E} \beta$ and $\gamma \counteracts_{\E} \beta$ respectively. Since it follows from RAGM that $B(\E * \lambda) \cap
B(\E * \chi) \subseteq B(\E * (\lambda \vee \chi))$ for {\em any} $\lambda, \chi \in L$, we 
can conclude from this $\neg\beta \in B(\E * (\alpha \vee \gamma))$. For the latter we already have both $\neg\alpha \in B(\E * \beta)$ and $\neg\gamma \in B(\E * \beta)$ from
$\alpha \counteracts_{\E} \beta$ and $\gamma \counteracts_{\E} \beta$ respectively. And from this we can conclude $\neg(\alpha \vee \gamma) \in B(\E * \beta)$, again using RAGM (specifically ($\E\ast 1$)). 
\\
{\em (ii)}
Suppose  $\alpha \not\counteracts_\E \beta$ and $\gamma \ncounteracts_{\E} \beta$. Firstly, if either $\neg\alpha \not\in
B(\E * \beta)$ or $\neg\gamma \not\in B(\E * \beta)$ then we must have $\neg(\alpha \vee \gamma) \not\in B(\E * \beta)$ by 
RAGM and so $(\alpha \vee \gamma) \not\counteracts_{\E} 
\beta$ as required. So suppose both $\neg\alpha \in
B(\E * \beta)$ and $\neg\gamma \in B(\E * \beta)$. Then, since $\alpha \not\counteracts_\E \beta$ and $\gamma \ncounteracts_{\E} \beta$, this means we have both $\neg\beta \not\in B(\E * \alpha)$ and $\neg\beta \not\in B(\E * \gamma)$. Since it follows from RAGM that 
$B(\E * (\lambda \vee \chi)) \subseteq B(\E * \lambda) \cup
B(\E * \chi)$ for {\em any} $\lambda, \chi \in L$, it
follows from these two that $\neg\beta \not\in B(\E * (\alpha \vee \gamma))$ and so also in this case $(\alpha \vee \gamma) \not\counteracts_{\E} 
\beta$ as required.
\end{proof}

\noindent
The first property above says that if $\beta$ counteracts with two sentences separately, then it counteracts
with their disjunction, while the second says that it cannot counteract with a disjunction without counteracting with
{\em at least one} of the disjuncts.
Obviously these properties also hold for the binary relation of logical inconsistency. However one departure from
the inconsistency relation is that it is possible to have both $\gamma \not\counteracts_\E \beta$ and 
$(\alpha \vee \gamma) \counteracts_\E \beta$. To see this assume for the moment $L$ is generated by just three propositional
atoms $\{p,q,r\}$ and take $\alpha = p$, $\beta = q$ and $\gamma = r$. Then take 
$\preceq_\E$ to be such that its lowest plausibility level contains only the two valuations
$010$ and $100$, and the next plausibility level only the valuation $111$.

We are now ready to introduce our third postulate. It is the following:
\begin{description}
\item[(D)] If $\alpha\counteracts_{\E}\beta$ then
                   $\lnot\alpha\in B(\E\ast\alpha\ast\beta)$
\end{description}
(D) requires that, whenever $\alpha$ and $\beta$ counteract with
respect to $\E$, $\alpha$ should be \emph{disallowed} when an
$\alpha$-revision is followed by a $\beta$-revision. That is, when
the $\beta$-revision of $\E\ast\alpha$ takes place, the information
encoded in $\E$ takes precedence over the information contained in
$\E\ast\alpha$. Darwiche and Pearl  \citeyear{Darwiche-ea:97a} considered this property (it is
their rule (C6)) but argued against it,
citing the following example.
\begin{example}
\label{Ex:john-mary}%
\cite{Darwiche-ea:97a} We believe that exactly one of John and Mary
committed a murder. Now we get persuasive evidence indicating that
John is the murderer. This is followed by persuasive information
indicating that Mary is the murderer.  Let $\alpha$ represent that
John committed the murder and $\beta$ that Mary committed the
murder. Then (D) forces us to conclude that Mary, but not John, was
involved in the murder.  This, according to Darwiche and Pearl, is
counterintuitive, since we should conclude that both were involved
in committing the murder.
\end{example}
Darwiche and Pearl's argument against (D) rests upon the assumption that more recent information
ought to take precedence over information previously obtained. But as we have seen in Example
\ref{Ex:red-bird-2}, this is not always a valid assumption. In fact, the application of (D) to
Example \ref{Ex:red-bird-2}, with $\alpha=\mathtt{red\rightarrow bird}$ and $\beta=\mathtt{\lnot bird}$, produces the
intuitively correct result of a belief in the observed animal being red:
${\mathtt red}\in B(\E\ast(\mathtt{red\rightarrow bird)\ast\lnot bird}$).

Another way to gain insight into the significance of (D) is to consider its semantic counterpart:
\begin{description}
\item[(DR)] For $v\in [\lnot\alpha]$, $w\in [\alpha]$, and $w\notin [B(\E\ast\alpha)]$, if
$v\prec_{\E} w$ then $v\prec_{\E\ast\alpha} w$
\end{description}
(DR) curtails the rise in plausiblity of $\alpha$-worlds after an $\alpha$-revision. It ensures that,
with the exception of the most plausible $\alpha$-worlds, the relative ordering between an
$\alpha$-world and the $\lnot\alpha$-worlds more plausible than it remains unchanged.
\begin{proposition}
\label{Prop:D-DR}%
Whenever a revision operator $\ast$ satisfies RAGM, then $\ast$ satisfies (D) iff it satisfies (DR).
\end{proposition}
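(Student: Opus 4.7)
The plan is to unpack both (D) and (DR) into statements about the orderings $\preceq_\E$ and $\preceq_{\E*\alpha}$ using the RAGM identity $[B(\E*\gamma)] = \min(\gamma, \preceq_\E)$, together with the semantic characterisation of $\counteracts_\E$ recorded in Proposition \ref{counter}, which gives $\alpha \counteracts_\E \beta$ iff $\min(\alpha, \preceq_\E) \subseteq [\neg\beta]$ and $\min(\beta, \preceq_\E) \subseteq [\neg\alpha]$.

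For (DR) $\Rightarrow$ (D), I would assume $\alpha \counteracts_\E \beta$ and show $\min(\beta, \preceq_{\E*\alpha}) \subseteq [\neg\alpha]$. Take any $w \in [\alpha \wedge \beta]$; the counteracts condition $\min(\alpha, \preceq_\E) \subseteq [\neg\beta]$ immediately gives $w \notin [B(\E*\alpha)]$. Now pick any $v \in \min(\beta, \preceq_\E)$, so $v \in [\beta] \cap [\neg\alpha]$. Since $w \in [\beta]$, $v \preceq_\E w$, and strictness $v \prec_\E w$ follows because $w \in [\alpha]$ while $\min(\beta, \preceq_\E) \subseteq [\neg\alpha]$. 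Then (DR) applied to $v,w$ yields $v \prec_{\E*\alpha} w$, so $w$ is not a $\preceq_{\E*\alpha}$-minimal $\beta$-world, which is what we need.

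For (D) $\Rightarrow$ (DR), assume $v \in [\neg\alpha]$, $w \in [\alpha]$, $w \notin [B(\E*\alpha)]$, and $v \prec_\E w$. Using the fact that $L$ is finitely generated, pick $\beta$ with $[\beta] = \{v, w\}$. I would then verify $\alpha \counteracts_\E \beta$ by checking both clauses: $\min(\alpha, \preceq_\E) \subseteq [\neg\beta]$ holds because $v \in [\neg\alpha]$ keeps $v$ out of $\min(\alpha, \preceq_\E)$, and $w$ is out by the assumption $w \notin [B(\E*\alpha)]$; and $\min(\beta, \preceq_\E) \subseteq [\neg\alpha]$ holds because $v \prec_\E w$ forces $\min(\beta, \preceq_\E) = \{v\} \subseteq [\neg\alpha]$. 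Now (D) gives $\neg\alpha \in B(\E*\alpha*\beta)$, i.e.\ $\min(\beta, \preceq_{\E*\alpha}) \subseteq [\neg\alpha]$. Since $\min(\beta, \preceq_{\E*\alpha}) \subseteq \{v,w\}$ and $w \in [\alpha]$, we must have $\min(\beta, \preceq_{\E*\alpha}) = \{v\}$, giving $v \prec_{\E*\alpha} w$ as required.

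The only mildly delicate points are the strictness of $v \prec_\E w$ in the first direction (which follows because an $\alpha$-world cannot sit at the same plausibility level as a minimal $\beta$-world when $\min(\beta, \preceq_\E) \subseteq [\neg\alpha]$) and the availability of a sentence $\beta$ with $[\beta] = \{v,w\}$ in the second direction (immediate from finite generation of $L$). Neither step should present real difficulty; the main conceptual work is already done by Proposition \ref{counter}, which reduces $\counteracts_\E$ to a clean condition on $\preceq_\E$.
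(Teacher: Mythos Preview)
Your proposal is correct and follows essentially the same route as the paper's own proof: in both directions you use the RAGM identity $[B(\E*\gamma)]=\min(\gamma,\preceq_\E)$, and for (D)$\Rightarrow$(DR) you pick $\beta$ with $[\beta]=\{v,w\}$ exactly as the paper does, while for (DR)$\Rightarrow$(D) your direct argument is just the unwound contrapositive of the paper's proof by contradiction. One minor remark: the characterisation of $\counteracts_\E$ you invoke (namely $\min(\alpha,\preceq_\E)\subseteq[\neg\beta]$ and $\min(\beta,\preceq_\E)\subseteq[\neg\alpha]$) is the observation stated just \emph{before} Proposition~\ref{counter} rather than Proposition~\ref{counter} itself, but this does not affect the argument.
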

\begin{proof}
For (D)$\Rightarrow$(DR), suppose that $v\in[\lnot\alpha]$, $w\in[\alpha]$, $w\notin[B(\E*\alpha)]$, 
$v\prec_\E w$, and let $\beta$ be such that $[\beta]=\{v,w\}$. Then $\lnot\alpha\in B(\E*\beta)$
and $\lnot\beta\in B(\E*\alpha)$, and so, by (D). $\lnot\alpha\in B(\E*\alpha *\beta)$. 
From this it follows that $v\prec_{\E*\alpha} w$. For if not, we would have that 
$w\preceq_{\E*\alpha} v$, which means that $w\in[B(\E*\alpha *\beta)]$, and therefore that
$\lnot\alpha\notin B(\E*\alpha *\beta)$; a contradiction.

For (D)$\Leftarrow$(DR), suppose that $\lnot\beta\in B(\E*\alpha)$ and that 
 $\lnot\alpha\in B(\E*\beta)$, but assume that $\lnot\alpha\notin B(\E*\alpha *\beta)$.
 This means there is a $w\in[\alpha]$ that is also in  $[B(\E*\alpha *\beta)]$. 
 Now observe that $w\notin [B(\E*\beta)]$  since $w$ is an $\alpha$-model. 
 Also, since $w\in [B(\E*\alpha *\beta)]$, it follows from RAGM that $w$ is a $\beta$-model, 
 and therefore $w\notin[B(\E*\alpha)]$.  Our supposition of 
 $\lnot\alpha\in B(\E\ast\beta)$ means that $\min(\beta,\preceq_\E)\subseteq [\lnot\alpha]$.
 Since $w\in[\alpha]\cap[\beta]$ it thus follows that there is a $v\in[\beta]\cap[\lnot\alpha]$ such that $v\prec_\E w$. 
  By (DR) it then 
 follows that $v\prec_{\E*\alpha}w$. But then $w$ cannot be a model of $B(\E*\alpha *\beta)$;
 a contradiction. 
\end{proof}

\section{Restrained Revision}
\label{Sec:restrained}%
We now strengthen the requirements on admissible revision (those
operators satisfying RAGM, (C1), (C2) and (P)) by insisting that (D)
is satisfied as well. To do so, let us first consider the semantic
definition of an interesting admissible revision operator. Recall
that RAGM fixes the set of ($\preceq_{\E\ast\alpha}$)-minimal
models, setting them equal to $\min(\alpha,\preceq_{\E})$, but places
no restriction on how the remaining valuations should be ordered.
The following property provides a \emph{unique} relative ordering of
the remaining valuations.
\begin{description}
\item[(RR)] $\forall v,w\notin [B(\E\ast\alpha)]$, $v\preceq_{\E\ast\alpha} w$ iff
$\left\{%
\begin{array}{l}
v\prec_{\E} w\textrm{ or,} \\
v\preceq_{\E} w\textrm{ and }(v\in [\alpha]\textrm{ or  }w\in [\lnot\alpha])
\end{array}
\right.$
\end{description}
(RR) says that the relative ordering of the
valuations that are not ($\preceq_{\E\ast\alpha}$)-minimal remains unchanged, except
for $\alpha$-worlds and $\lnot\alpha$-worlds on the same plausibility level; those are split
into two levels with the $\alpha$-worlds more plausible than the $\lnot\alpha$-worlds.
So RAGM combined with (RR) fixes a unique ordering of valuations.
\begin{definition}
The revision operator satisfying RAGM and (RR) is called \emph{restrained revision}.
\end{definition}
It turns out that within the framework provided by admissible revision, it is only restrained
revision that satisfies (D). 
We prove this with the help of the following lemma, asserting the equivalence of 
(RR) to (CR1), (CR2), (PR) and (DR) in the presence of RAGM.
\begin{lemma}
\label{Lem:R}%
Whenever a revision operator $*$ satisfies RAGM, then $*$ satisfies (RR) iff it satisfies
(CR1), (CR2), (PR), and (DR). 
\end{lemma}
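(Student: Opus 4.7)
The plan is to prove both directions by case analysis on the types of the valuations $v, w$ (whether each is an $\alpha$-world or a $\lnot\alpha$-world) and on whether either lies in $[B(\E\ast\alpha)]$. Throughout I use that RAGM already fixes $[B(\E\ast\alpha)] = \min(\alpha, \preceq_\E)$, so the only open question is the relative ordering of the non-minimal valuations -- which is precisely what (RR) pins down. The argument is largely bookkeeping once the case split is laid out.

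For the forward direction, assume (RR). For (CR1) and (CR2), observe that the right-hand side of (RR) collapses to $v \preceq_\E w$ whenever $v \in [\alpha]$ or $w \in [\lnot\alpha]$, which covers both postulates on the non-minimal part of the ordering; the residual cases where one of $v, w$ lies in $[B(\E\ast\alpha)]$ are handled by RAGM, since the minimal $\alpha$-worlds are strictly below the rest in both $\preceq_\E$ and $\preceq_{\E\ast\alpha}$. For (PR), with $v \in [\alpha]$, $w \in [\lnot\alpha]$, and $v \preceq_\E w$: if $v \in [B(\E\ast\alpha)]$ the strict inequality is immediate; otherwise apply (RR) to the pair $(w, v)$ and observe that the mixing disjunct ($w \in [\alpha]$ or $v \in [\lnot\alpha]$) fails, so $w \preceq_{\E\ast\alpha} v$ would require $w \prec_\E v$, which contradicts $v \preceq_\E w$. (DR) is obtained similarly by two applications of (RR), one for each direction of the pair.

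For the backward direction, fix $v, w \notin [B(\E\ast\alpha)]$ and split into four cases. The pure cases $v, w \in [\alpha]$ and $v, w \in [\lnot\alpha]$ are settled by (CR1) and (CR2) respectively after observing that the right-hand side of (RR) collapses to $v \preceq_\E w$. In the mixed case $v \in [\alpha], w \in [\lnot\alpha]$ the (RR) right-hand side again reduces to $v \preceq_\E w$; the sub-case $v \preceq_\E w$ is handled by (PR) (giving $v \prec_{\E\ast\alpha} w$), while the sub-case $w \prec_\E v$ is handled by (DR) with roles swapped, using $v \notin [B(\E\ast\alpha)]$ as its side condition (giving $w \prec_{\E\ast\alpha} v$). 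The mirror case $v \in [\lnot\alpha], w \in [\alpha]$ has the (RR) right-hand side reducing to $v \prec_\E w$, with the two sub-cases dispatched by (DR) and (PR) respectively.

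The main obstacle is not any single case but keeping the bookkeeping straight: I must carefully track which instance of (RR)'s inner disjunction fires when (RR) is applied to a pair versus its reversal, and verify that (DR)'s side condition $w \notin [B(\E\ast\alpha)]$ is met at every invocation -- both secured by the initial restriction to valuations outside $[B(\E\ast\alpha)]$.
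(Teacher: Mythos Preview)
Your proof is correct and follows essentially the same case-analysis approach as the paper's. The only cosmetic differences are that the paper introduces an equivalent strict-inequality reformulation (RR$'$) of (RR) to derive (PR) and (DR), whereas you obtain the needed strict inequalities by applying (RR) to the reversed pair $(w,v)$; and in the backward direction the paper treats the two halves of the biconditional separately rather than performing your four-way membership split upfront.
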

\begin{proof}
For (CR1)$\Leftarrow$(RR), pick  $v,w\in[\alpha]$. If  $v\in[B(\E\ast\alpha)]$ then 
(CR1) follows from RAGM. If not, it follows from RAGM that $w\notin[B(\E\ast\alpha)]$, 
and then (CR1) follows from a direct application of (RR). For (CR2)$\Leftarrow$(RR), pick  $v,w\in[\lnot\alpha]$.
From RAGM it follows that  $v,w\notin[B(\E\ast\alpha)]$, and then we obtain (CR2) from a direct
application of (RR). 

Observe that (RR) can be rewritten as 
\begin{description}
\item[(RR$'$)] $\forall v,w\notin [B(\E\ast\alpha)]$, $v\prec_{\E\ast\alpha} w$ iff
$\left\{%
\begin{array}{l}
v\preceq_{\E} w\textrm{ and,} \\
v\prec_{\E} w\textrm{ or }(v\in [\alpha]\textrm{ and  }w\in [\lnot\alpha])
\end{array}
\right.$
\end{description}
Now, for (PR)$\Leftarrow$(RR), pick  $v\in[\alpha]$ and  $w\in[\lnot\alpha]$.
 If $v\in[B(\E\ast\alpha)]$ then (PR) follows from RAGM. If not, it follows from 
 a direct application of (RR$'$). For (DR)$\Leftarrow$(RR), pick  $v\in[\lnot\alpha]$, $w\in[\alpha]$,
and  $w\notin[B(\E\ast\alpha)]$. Then (PR) follows from a direct application of (RR$'$). 

For (CR1), (CR2), (PR), (DR)$\Rightarrow$(RR),  let $v,w\notin [B(\E*\alpha)]$
and suppose that $v\preceq_{\E\ast\alpha} w$ and $v\not\prec_\E w$ (i.e. $w\preceq_{\E} v$).
We have to show  that $v\preceq_{\E} w$ and either $v\in[\alpha]$ or $w\in[\lnot\alpha]$. 
Assume this is not the case. Then  $w\prec_{\E} v$ or both $v\in[\lnot\alpha]$ and $w\in[\alpha]$.  
Now, the second case is impossible because, together with $w\preceq_{\E} v$ and (PR) it 
implies that $w\prec_{\E\ast\alpha} v$;  a contradiction. But the first case is also impossible.
To see why, observe that by (CR1) it implies that  $v$ and $w$ cannot both be $\alpha$-models, by (CR2) $v$ and $w$ cannot both be $\lnot\alpha$-models, by (DR) it cannot be 
the case that $w\in[\lnot\alpha]$ and $v\in[\alpha]$. And by (PR) it cannot be the case
that $w\in[\alpha]$ and $v\in[\lnot\alpha]$. This concludes the first part of the proof of (CR1), (CR2), (PR), (DR)$\Rightarrow$(RR). For the second part, let $v,w\notin [B(\E*\alpha)]$ and suppose first that 
$v\prec_{\E}w$. If $v,w\in[\alpha]$ then $v\preceq_{\E\ast\alpha}w$ follows from (CR1). 
 If $v,w\in[\lnot\alpha]$ then $v\preceq_{\E\ast\alpha}w$ follows from (CR2).
 If $v\in[\alpha]$ and $w\in[\lnot\alpha]$ then $v\preceq_{\E\ast\alpha}w$ follows from (PR).
 If $v\in[\lnot\alpha]$ and $w\in[\alpha]$ then $v\preceq_{\E\ast\alpha}w$ follows from (DR).
 Now suppose that $v\preceq_\E w$ and either $v\in[\alpha]$ or $w\in[\lnot\alpha]$. 
If $v\in[\alpha]$ then $v\preceq_{\E\ast\alpha}w$ follows either from (CR1) or (PR), depending
on whether $w\in[\alpha]$ or $w\in[\lnot\alpha]$. And similarly, if $w\in[\lnot\alpha]$ then $v\preceq_{\E\ast\alpha}w$ follows either from (CR2) or (PR), depending
on whether $v\in[\lnot\alpha]$ or $v\in[\alpha]$.
\end{proof}

\begin{theorem}
\label{Thm:restrained}%
RAGM, (C1), (C2), (P) and (D) provide an exact characterisation of restrained revision.
\end{theorem}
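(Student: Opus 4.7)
The plan is to assemble the theorem directly from the three connecting results already established in the section, namely Proposition \ref{Prop:P-PR} (equivalence of (P) and (PR) under DP-AGM), Proposition \ref{Prop:D-DR} (equivalence of (D) and (DR) under RAGM), and Lemma \ref{Lem:R} (equivalence of (RR) to the conjunction of (CR1), (CR2), (PR), (DR) under RAGM), together with the Darwiche--Pearl correspondence between (C$i$) and (CR$i$) ($i = 1, 2$) under DP-AGM (and hence under RAGM, which strengthens DP-AGM). Since restrained revision is defined to be the unique operator satisfying RAGM and (RR), it suffices to establish, under the assumption of RAGM, the equivalence between (C1), (C2), (P), (D) on the one hand and (RR) on the other.

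For the forward direction (soundness), I would start from RAGM plus (C1), (C2), (P), (D). The Darwiche--Pearl correspondence (which holds given DP-AGM and therefore a fortiori under RAGM) yields (CR1) from (C1) and (CR2) from (C2). Proposition \ref{Prop:P-PR} converts (P) into (PR), and Proposition \ref{Prop:D-DR} converts (D) into (DR). Feeding (CR1), (CR2), (PR), (DR) into Lemma \ref{Lem:R} then delivers (RR), which, combined with RAGM, means the operator is restrained revision.

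For the converse direction (completeness), I would start from RAGM plus (RR). Lemma \ref{Lem:R} immediately yields (CR1), (CR2), (PR), (DR). The Darwiche--Pearl correspondence converts (CR1) and (CR2) back into (C1) and (C2), while Propositions \ref{Prop:P-PR} and \ref{Prop:D-DR} convert (PR) into (P) and (DR) into (D) respectively. Thus restrained revision satisfies all five postulates listed in the theorem.

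Since every piece of machinery needed has already been proved, there is no genuine obstacle here; the task is purely one of correctly citing Lemma \ref{Lem:R} together with Propositions \ref{Prop:P-PR} and \ref{Prop:D-DR} and the Darwiche--Pearl (C$i$)/(CR$i$) correspondence, and pointing out that RAGM implies DP-AGM (minus the superfluous ($\E\ast 6$)) so that these auxiliary results all remain available. The proof should therefore be just a few lines of bookkeeping.
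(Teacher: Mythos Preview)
Your proposal is correct and follows essentially the same approach as the paper, which simply cites Lemma~\ref{Lem:R}, Propositions~\ref{Prop:P-PR} and~\ref{Prop:D-DR}, and the Darwiche--Pearl (C$i$)/(CR$i$) correspondence in a single sentence. Your only addition is the explicit observation that RAGM strengthens DP-AGM so that the DP-AGM-based auxiliary results remain applicable, which is a sensible clarification.
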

\begin{proof}
The proof follows from Lemma \ref{Lem:R}, Proposition \ref{Prop:P-PR}, Proposition \ref{Prop:D-DR}, 
and the correspondence between (C1) and (CR1), and (C2) and (CR2). 
\end{proof}

\noindent Another interpretation of  (RR) is that it maintains the relative
ordering of the valuations that are not
($\preceq_{\E\ast\alpha}$)-minimal, except for the changes mandated
by (PR). From this it can be seen that restrained revision is the
most conservative of all admissible revision operators, in the sense that 
effects the least changes in the relative ordering of valuations permitted by
admissible revision. So, in the
context of admissible revision, restrained revision takes on the
role played by natural revision in the Darwiche-Pearl framework.

In the rest of this section we examine some further properties of restrained revision. Firstly,
Examples \ref{Ex:red-bird-2} and \ref{Ex:john-mary} share some
interesting structural properties. In both, the initial knowledge
base $B(\E)$ is pairwise consistent with each of the subsequent
sentences in the revision sequence, while the sentences in each
revision sequence are pairwise inconsistent.  And in both examples
the information contained in the initial knowledge base $B(\E)$ is
retained after the revision sequence.  These commonalities are
instances of an important general result. Let $\Gamma$ denote the
non-empty sequence of inputs $\gamma_1,\ldots,\gamma_n$, and let
$\E\ast\Gamma$ denote the revision sequence
$\E\ast\gamma_1\ast\ldots\ast\gamma_n$. Furthermore we shall refer
to an epistemic state $\E$ as $\Gamma$\emph{-compatible} provided
that $\lnot\gamma_i\notin B(\E)$ for every $i$ in $\{1,\ldots,n\}$.
\begin{description}
\item[(O)] If  $\E$ is $\Gamma$-compatible then $B(\E)\subseteq B(\E\ast\Gamma)$
\end{description}
(O) says that as long as $B(\E)$ is not in direct conflict with \emph{any} of the inputs in the
sequence $\gamma_1,\ldots,\gamma_n$, the entire $B(\E)$  \emph{has} to be
propagated to the knowledge base obtained from the revision sequence  $\E\ast\gamma_1\ast\ldots \ast\gamma_n$.
This is a preservation property that is satisfied by restrained revision.
\begin{proposition}
\label{Prop:O}%
Restrained revision satisfies (O).
\end{proposition}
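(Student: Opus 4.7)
My plan is to prove (O) by establishing a stronger semantic statement: whenever $\E$ is $\Gamma$-compatible, every $v \in [B(\E)]$ is strictly $\preceq_{\E\ast\Gamma}$-more plausible than every $w \notin [B(\E)]$. This immediately yields $[B(\E\ast\Gamma)] \subseteq [B(\E)]$, and hence $B(\E) \subseteq B(\E\ast\Gamma)$, which is what (O) asks for.

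I would proceed by induction on $n = |\Gamma|$. The base case $n=1$ is handled directly using RAGM and (RR$'$) (the reformulation of (RR) given in the proof of Lemma \ref{Lem:R}): if $v \in [B(\E)] \cap [\gamma_1]$ then RAGM places $v$ in $[B(\E\ast\gamma_1)]$ while $w \notin [B(\E)] \supseteq [B(\E\ast\gamma_1)]$; and if $v \in [B(\E)] \cap [\lnot\gamma_1]$ then neither $v$ nor $w$ lies in $[B(\E\ast\gamma_1)]$, so (RR$'$) applied to $v \prec_\E w$ gives $v \prec_{\E\ast\gamma_1} w$.

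For the inductive step, set $\E' = \E \ast \gamma_1 \ast \ldots \ast \gamma_{n-1}$. Since $\E$ is also compatible with the shorter sequence $\gamma_1, \ldots, \gamma_{n-1}$, the induction hypothesis yields $v \prec_{\E'} w$ for every $v \in [B(\E)]$ and $w \notin [B(\E)]$. Fixing such a pair, a case split on whether $v$ and $w$ lie in $[B(\E\ast\Gamma)] = \min(\gamma_n, \preceq_{\E'})$ disposes of three cases routinely: if both are minimal they would sit at a common $\preceq_{\E'}$-level, contradicting the induction hypothesis; if only $v$ is minimal, RAGM gives $v \prec_{\E\ast\Gamma} w$ at once; and if neither is minimal, (RR$'$) applied to $v \prec_{\E'} w$ yields the conclusion. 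The main obstacle is the remaining case, in which $w \in [B(\E\ast\Gamma)]$ but $v \notin [B(\E\ast\Gamma)]$. Here I would invoke $\Gamma$-compatibility, using $\lnot\gamma_n \notin B(\E)$ to extract a witness $u \in [B(\E)] \cap [\gamma_n]$; the induction hypothesis then forces $u \prec_{\E'} w$, while the $\preceq_{\E'}$-minimality of $w$ among $\gamma_n$-worlds demands $w \preceq_{\E'} u$, giving a contradiction and ruling this case out entirely.
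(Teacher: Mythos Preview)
Your proof is correct and follows essentially the same approach as the paper: both establish by induction the invariant that every $v \in [B(\E)]$ is strictly below every $w \notin [B(\E)]$ after each revision step, and both use $\Gamma$-compatibility in the critical case to extract a witness in $[B(\E)] \cap [\gamma_i]$. The only cosmetic difference is that the paper splits cases according to membership of $v,w$ in $[\gamma_i]$ versus $[\lnot\gamma_i]$ and invokes (CR1), (CR2), (PR), (DR) separately, whereas you split on membership in $[B(\E\ast\Gamma)]$ and appeal to (RR$'$) directly; the paper also starts its induction at the trivial $i=0$ rather than at $n=1$.
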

\begin{proof}
We denote by $\E\ast\Gamma_i$, for $i=0,\ldots,n$, the revision sequence 
$\E\ast\gamma_1,\ldots,\gamma_i$ (with $\E\ast\Gamma_0=\E$).
We give an inductive proof that, for $\forall v\in[B(\E)]$ and  $\forall w\notin[B(\E)]$,
$v\prec_{\E\ast\Gamma_i}w$ for $i=0,\ldots,n$. In other words, every $B(\E)$-world 
is always strictly below every non $B(\E)$-world. From this the result follows
immediately. For $i=0$ this amounts to showing that
$v\prec_{\E} w$ which follows immediately from the definition of $\preceq_{\E}$ and $B(\E)$.
Now pick any $i=1,\ldots,n$ and assume that $v\prec_{\E\ast\Gamma_{i-1}}w$. 
We consider four cases. If $v,w\in[\gamma_i]$ then it follows by (CR1) that 
$v\prec_{\E\ast\Gamma_i}w$.  If $v,w\in[\lnot\gamma_i]$ then it follows by (CR2) that 
$v\prec_{\E\ast\Gamma_i}w$.  If $v\in[\gamma_i]$ and $w\in[\lnot\gamma_i]$ then it follows 
by (PR) that  $v\prec_{\E\ast\Gamma_i}w$. And finally, suppose $v\in[\lnot\gamma_i]$ and $w\in[\gamma_i]$. By $\Gamma_i$-compatibility there is an $x\in[B(E)]\cap[\gamma_i]$, 
and by the inductive hypothesis, $x\prec_{\E\ast\Gamma_{i-1}}w$. So $w\notin[B(\E\ast\Gamma_i)]$, and then it follows by (DR) that  $v\prec_{\E\ast\Gamma_i}w$. 
 \end{proof}

\noindent Although restrained revision preserves information which has not been directly contradicted, it is
not dogmatically wedded to older information.  If neither of two successive, but incompatible,
epistemic states are in conflict with any of the inputs of a sequence
$\Gamma = \gamma_1,\ldots,\gamma_n$,
it prefers the latter epistemic state when revising by $\Gamma$.
\begin{proposition}
Restrained revision satisfies the following property:
\begin{description}
\item[(Q)] If $\E$ and $\E\ast\alpha$ are both $\Gamma$-compatible but
$B(\E)\cup B(\E\ast\alpha)\vDash\bot$, then $B(\E*\alpha)\subseteq B(\E\ast\alpha\ast\Gamma)$
and $B(\E)\nsubseteq B(\E\ast\alpha\ast\Gamma)$
\end{description}
\end{proposition}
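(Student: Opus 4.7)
My plan is to prove the two claims separately. The first claim, $B(\E\ast\alpha)\subseteq B(\E\ast\alpha\ast\Gamma)$, falls out immediately from Proposition~\ref{Prop:O}: since the hypothesis already asserts that the epistemic state $\E\ast\alpha$ is $\Gamma$-compatible, Proposition~\ref{Prop:O} applied directly to $\E\ast\alpha$ gives $B(\E\ast\alpha)\subseteq B((\E\ast\alpha)\ast\Gamma)$, and left-associativity of $\ast$ identifies this with the desired inclusion.

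For the second claim, $B(\E)\nsubseteq B(\E\ast\alpha\ast\Gamma)$, I would argue by contradiction, leveraging the paper's standing requirement that every knowledge base extracted from an epistemic state is consistent. Because $B(\E)$ and $B(\E\ast\alpha)$ are both deductively closed and jointly inconsistent, deductive closure lets us conjoin finitely many members of $B(\E\ast\alpha)$ into a single $\psi\in B(\E\ast\alpha)$ with $\lnot\psi\in B(\E)$. The first part of the proof already places $\psi$ inside $B(\E\ast\alpha\ast\Gamma)$; were $B(\E)\subseteq B(\E\ast\alpha\ast\Gamma)$ also to hold, $\lnot\psi$ would lie there too, rendering $B(\E\ast\alpha\ast\Gamma)$ inconsistent and contradicting the consistency requirement.

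There is no real technical obstacle here; both parts reduce immediately to earlier material. One thing worth flagging, however, is that the supplementary hypothesis that $\E$ itself is $\Gamma$-compatible does no work in the argument. Its role in the statement appears to be expository: it rules out the trivial scenario in which $B(\E)$ would be lost simply because $\E$ already conflicts with some $\gamma_i$, and so it makes clear that restrained revision genuinely prefers the more recent of two mutually incompatible epistemic states, even when both are on their own consistent with the entire input sequence $\Gamma$.
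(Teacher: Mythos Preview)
Your proof is correct and follows essentially the same route as the paper: apply Proposition~\ref{Prop:O} to the state $\E\ast\alpha$ for the first inclusion, then derive the non-inclusion from the consistency of $B(\E\ast\alpha\ast\Gamma)$ together with the joint inconsistency of $B(\E)$ and $B(\E\ast\alpha)$. Your observation that the $\Gamma$-compatibility of $\E$ is never invoked in the argument is also correct; the paper's own proof likewise makes no use of it.
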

\begin{proof}
It follows immediately from Proposition \ref{Prop:O} that 
$B(\E*\alpha)\subseteq B(\E\ast\alpha\ast\Gamma)$. And $B(\E)\nsubseteq B(\E\ast\alpha\ast\Gamma)$
then follows from the consistency of $B(\E\ast\alpha\ast\Gamma)$.
\end{proof}

\noindent Next we consider another preservation property, but this time, unlike the case for (O) and (Q), we
look at circumstances where $B(\E)$ is incompatible with some of the inputs in a revision sequence.
\begin{description}
\item[(S)]
If $\neg\beta \in B(\E * \alpha)$ and $\neg\beta \in B(\E * \neg\alpha)$ then
$B(\E * \alpha * \neg\alpha * \beta) = B(\E * \alpha * \beta)$
\end{description}
Note that, given RAGM, the antecedent of (S) implies that $\neg\beta
\in B(\E)$. Thus (S) states that if $\neg\beta$ is believed
initially, and that a subsequent commitment to either $\alpha$ or
its negation would not change this fact, then after the sequence of
inputs in which $\beta$ is preceded by $\alpha$ and $\neg\alpha$,
the {\em second} input concerning $\alpha$ is nullified, and the
older input regarding $\alpha$ is retained.
\begin{proposition}
Restrained revision satisfies (S).
\end{proposition}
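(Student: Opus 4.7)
The plan is to work semantically, using the equivalence of restrained revision with the semantic rules (CR1), (CR2), (PR), (DR), and (RR) established in Lemma~\ref{Lem:R}. Write $\preceq_1$ for $\preceq_{\E\ast\alpha}$ and $\preceq_2$ for $\preceq_{\E\ast\alpha\ast\neg\alpha}$. The strategy is to show that $\preceq_1$ and $\preceq_2$ agree on the set $[\beta]$, which immediately yields $\min(\beta,\preceq_1)=\min(\beta,\preceq_2)$ and hence the desired equality $B(\E\ast\alpha\ast\beta)=B(\E\ast\alpha\ast\neg\alpha\ast\beta)$ via the RAGM formula $[B(\F\ast\gamma)]=\min(\gamma,\preceq_\F)$.

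First I would unpack the antecedent. RAGM gives $[B(\E\ast\alpha)]=\min(\alpha,\preceq_\E)$ and $[B(\E\ast\neg\alpha)]=\min(\neg\alpha,\preceq_\E)$, so both these sets lie inside $[\neg\beta]$. By (CR2) the ordering of $\neg\alpha$-worlds is preserved from $\preceq_\E$ to $\preceq_1$, so $\min(\neg\alpha,\preceq_1)=\min(\neg\alpha,\preceq_\E)=[B(\E\ast\neg\alpha)]\subseteq[\neg\beta]$, and this set is exactly $[B(\E\ast\alpha\ast\neg\alpha)]$. Hence both the $\preceq_1$-minimum $[B(\E\ast\alpha)]$ and the $\preceq_2$-minimum $[B(\E\ast\alpha\ast\neg\alpha)]$ are contained in $[\neg\beta]$; in particular, they contain no $\beta$-world.

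The key structural lemma I would then establish is that $\preceq_1$ has no cross-ties between $\alpha$-worlds and $\neg\alpha$-worlds outside its minimum. Concretely, if $v\in[\alpha]$ and $w\in[\neg\alpha]$ with $v\notin[B(\E\ast\alpha)]$, then $v\sim_1 w$ is impossible: totality of $\preceq_\E$ forces $v\preceq_\E w$ or $w\prec_\E v$, and these give $v\prec_1 w$ by (PR) or $w\prec_1 v$ by (DR) respectively. With this in hand, I apply (RR) to describe $\preceq_2$ from $\preceq_1$: the rule sends $\min(\neg\alpha,\preceq_1)$ to the new bottom and otherwise preserves the $\preceq_1$-order, except for breaking ties between $\alpha$-worlds and $\neg\alpha$-worlds in favour of the $\neg\alpha$-world. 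Since no such ties exist outside the $\preceq_1$-minimum, and the $\preceq_1$-minimum $[B(\E\ast\alpha)]\subseteq[\alpha]$ is itself untied with any $\neg\alpha$-world at the same level, the only genuine change from $\preceq_1$ to $\preceq_2$ is the repositioning of the two $\neg\beta$-sets $[B(\E\ast\alpha)]$ and $[B(\E\ast\alpha\ast\neg\alpha)]$ at the bottom.

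Consequently, restricted to $[\beta]$, the preorders $\preceq_1$ and $\preceq_2$ coincide, so $\min(\beta,\preceq_1)=\min(\beta,\preceq_2)$ and (S) follows. The main obstacle I expect is the bookkeeping in the step describing $\preceq_2$ from (RR): I need to argue carefully that the ``promoted'' level $\min(\neg\alpha,\preceq_1)$ is disjoint from $[\beta]$ and that the tie-breaking clause of (RR$'$) is vacuous for pairs of $\beta$-worlds, so that comparisons between $\beta$-worlds in $\preceq_2$ reduce to the corresponding comparisons in $\preceq_1$.
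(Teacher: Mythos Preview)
Your proposal is correct and takes a genuinely different route from the paper. The paper argues syntactically via the compact characterisation (T) of Proposition~\ref{succinct}: it splits on whether $\neg\alpha \counteracts_{\E*\alpha} \beta$, and in each branch reduces both $B(\E*\alpha*\neg\alpha*\beta)$ and $B(\E*\alpha*\beta)$ to the same one-step revision using (T), (C2) and RAGM. You instead work semantically, establishing that $\preceq_{\E*\alpha}$ already separates $\alpha$-worlds from $\neg\alpha$-worlds (this is exactly the property (UR) proved later in the paper), so that the tie-breaking clause of (RR) is vacuous when passing to $\preceq_{\E*\alpha*\neg\alpha}$; combined with the observation that both $[B(\E*\alpha)]$ and $[B(\E*\alpha*\neg\alpha)]$ lie in $[\neg\beta]$, this yields agreement of the two preorders on $[\beta]$. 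Your approach is more self-contained---it does not rely on the (T) characterisation and gives a transparent picture of \emph{why} the second $\neg\alpha$-step is invisible from the vantage point of $[\beta]$---while the paper's approach is shorter because it cashes in on machinery already developed. One small remark: your informal description of the ``only genuine change'' as a repositioning of two sets is slightly loose (the change is really just the promotion of $[B(\E*\alpha*\neg\alpha)]$), but the precise argument you give afterwards, restricting attention to pairs in $[\beta]$ and invoking (RR) directly, is exactly right.
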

\begin{proof}
Suppose the antecedent holds. If $\neg\alpha \counteracts_{\E*\alpha} \beta$ then the consequent holds. In fact this
can be seen from the property (T) in Proposition \ref{succinct} below. So
suppose $\neg\alpha \not\counteracts_{\E*\alpha} \beta$. Then either $\alpha \not\in B(\E * \alpha * \beta)$ or
$\neg\beta \not\in B(\E * \alpha * \neg\alpha)$. This latter doesn't hold by one of the assumptions together with (C2), so the former must hold. This implies $\neg\alpha \in B(\E * \beta)$ by (P). Combining this with the other assumption we
get $\alpha \counteracts_{\E} \beta$. In this case we get $B(\E * \alpha * \beta) = B(\E * \beta)$ (again using
(T)), while (since
$\neg\alpha \not\counteracts_{\E*\alpha} \beta$) $B(\E * \alpha * \neg\alpha * \beta) = B(\E * \alpha * (\neg\alpha \wedge \beta))$ ((T) once more), which in turn equals $B(\E * (\neg\alpha \wedge \beta))$ by (C2). Since $\neg\alpha \in B(\E * \beta)$ this is in turn equal to
$B(\E * \beta)$ by RAGM as required.
\end{proof}

\noindent
We now provide a more compact syntactic representation of restrained
revision. First we show that (C1) and (P) can be combined into a
single property, and so can (C2) and (D).
\begin{proposition}
\label{compact}
Given RAGM,
\begin{enumerate}
\item (C1) and (P) are together equivalent to the single rule
\begin{description}
\item[(C1P)] If $\neg\alpha \not\in B(\E \ast \beta)$ then $B(\E \ast \alpha \ast \beta) = B(\E \ast (\alpha \wedge \beta))$
\end{description}
\item (C2) and (D) are together equivalent to the single rule
\begin{description}
\item[(C2D)] If $\alpha \counteracts_{\E} \beta$ then $B(\E \ast \alpha \ast \beta) = B(\E \ast \beta)$.
\end{description}
\end{enumerate}
\end{proposition}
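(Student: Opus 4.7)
The plan is to handle each equivalence separately, and within each to dispatch the two easy directions (the combined rule implies each of its constituents) first, then do the harder direction (the two constituents imply the combined rule), which in both cases is best attacked semantically via the minimal-model characterisation granted by RAGM.

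For part 1, to derive (C1) from (C1P), I would note that when $\beta \vDash \alpha$ one has $\alpha \in B(\E \ast \beta)$ by ($\E\ast 2$) and consistency, so $\neg\alpha \not\in B(\E \ast \beta)$; then (C1P) gives $B(\E \ast \alpha \ast \beta) = B(\E \ast (\alpha \wedge \beta))$, and since $\alpha \wedge \beta \equiv \beta$, RAGM (via ($\E\ast 5'$)) finishes the job. To derive (P) from (C1P), simply use ($\E\ast 2$) to see that $\alpha \wedge \beta \in B(\E \ast (\alpha \wedge \beta)) = B(\E \ast \alpha \ast \beta)$. The main direction is (C1)+(P) $\Rightarrow$ (C1P): assume $\neg\alpha \notin B(\E \ast \beta)$. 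Using (P) and ($\E\ast 2$) we obtain $[B(\E \ast \alpha \ast \beta)] \subseteq [\alpha \wedge \beta]$, which by RAGM forces $\min(\beta, \preceq_{\E \ast \alpha}) = \min(\alpha \wedge \beta, \preceq_{\E \ast \alpha})$. Since $\alpha \wedge \beta \vDash \alpha$, (C1) gives $\min(\alpha \wedge \beta, \preceq_{\E \ast \alpha}) = \min(\alpha \wedge \beta, \preceq_\E) = [B(\E \ast (\alpha \wedge \beta))]$, yielding (C1P).

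For part 2, (C2D) $\Rightarrow$ (D) is immediate because $\alpha \counteracts_\E \beta$ means $\neg\alpha \in B(\E \ast \beta)$. For (C2D) $\Rightarrow$ (C2), I would verify that $\beta \vDash \neg\alpha$ forces $\alpha \counteracts_\E \beta$ (using ($\E\ast 2$) on both sides). The substantive direction is (C2)+(D) $\Rightarrow$ (C2D): assume $\alpha \counteracts_\E \beta$. By (D) and ($\E\ast 2$), $[B(\E \ast \alpha \ast \beta)] \subseteq [\neg\alpha \wedge \beta]$, so $\min(\beta, \preceq_{\E \ast \alpha}) = \min(\neg\alpha \wedge \beta, \preceq_{\E \ast \alpha})$. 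Since $\neg\alpha \wedge \beta \vDash \neg\alpha$, (C2) gives $\min(\neg\alpha \wedge \beta, \preceq_{\E \ast \alpha}) = \min(\neg\alpha \wedge \beta, \preceq_\E)$. Finally, $\alpha \counteracts_\E \beta$ also gives $\neg\alpha \in B(\E \ast \beta)$, so $\min(\beta, \preceq_\E) \subseteq [\neg\alpha]$ and hence $\min(\beta, \preceq_\E) = \min(\neg\alpha \wedge \beta, \preceq_\E)$. Chaining these equalities yields $[B(\E \ast \alpha \ast \beta)] = [B(\E \ast \beta)]$.

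The main obstacle in both hard directions is the same small semantic trick: using the consequent of the constituent rules (one inclusion about the set of most plausible models after the double revision) to replace the input $\beta$ by a stronger input ($\alpha \wedge \beta$ or $\neg\alpha \wedge \beta$) for which one of (C1) or (C2) already controls the minima. Once this reformulation is made, the result falls out from RAGM. I do not expect any subtlety beyond keeping track of which $\preceq$ is being minimised over and invoking ($\E\ast 5'$) silently to pass between logically equivalent inputs.
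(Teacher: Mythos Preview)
Your proposal is correct and follows essentially the same route as the paper. The only difference is presentational: in the two substantive directions you phrase the key step semantically (passing from $\min(\beta,\preceq_{\E\ast\alpha})$ to $\min(\alpha\wedge\beta,\preceq_{\E\ast\alpha})$ or $\min(\neg\alpha\wedge\beta,\preceq_{\E\ast\alpha})$), whereas the paper stays syntactic and writes $B(\E\ast\alpha\ast\beta) = B(\E\ast\alpha\ast(\alpha\wedge\beta))$ (respectively $B(\E\ast\alpha\ast(\neg\alpha\wedge\beta))$) as a direct consequence of RAGM, before invoking (C1) (respectively (C2)) on the strengthened input---the underlying idea is identical.
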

\begin{proof}
For (C1),(P)$\Rightarrow$(C1P), suppose  $\neg\alpha \not\in B(\E \ast \beta)$. 
By (P) it follows that $\alpha\in B(\E\ast\alpha\ast\beta)$ which means, by RAGM, 
that $B(\E\ast\alpha\ast\beta)=B(\E\ast\alpha\ast(\alpha\wedge\beta))$. By (C1) it follows
that $B(\E\ast\alpha\ast(\alpha\wedge\beta))=B(\E\ast(\alpha\wedge\beta))$, and thus
that $B(\E\ast\alpha\ast\beta)=B(\E\ast(\alpha\wedge\beta))$.
For (C1)$\Leftarrow$(C1P), suppose that $\beta\vDash\alpha$. Then  
$\neg\alpha \not\in B(\E \ast \beta)$ by RAGM, and so $B(\E\ast\alpha\ast\beta)=B(\E\ast(\alpha\wedge\beta))$ by (C1P).
But since $\beta\equiv\alpha\wedge\beta$ it follows that $B(\E\ast\alpha\ast\beta)=B(\E\ast\beta)$.
For (P)$\Leftarrow$(C1P), suppose that $\neg\alpha \not\in B(\E \ast \beta)$.
Then $B(\E\ast\alpha\ast\beta)=B(\E\ast(\alpha\wedge\beta))$ by (C1P) which means, by RAGM, 
that $\alpha\in B(\E\ast\alpha\ast\beta)$.

For (C2),(D)$\Rightarrow$(C2D), suppose that $\alpha \counteracts_{\E} \beta$. By (D), 
$\lnot\alpha\in B(\E\ast\alpha\ast\beta)$. By RAGM this means that
$B(\E\ast\alpha\ast\beta)=B(\E\ast\alpha\ast(\lnot\alpha\wedge\beta))$.
Now, by (C2) it follows that
$B(\E\ast\alpha\ast(\lnot\alpha\wedge\beta))=B(\E\ast(\lnot\alpha\wedge\beta))$. 
So $B(\E\ast\alpha\ast\beta)=B(\E\ast(\lnot\alpha\wedge\beta))$.
But since $\lnot\alpha\in B(\E\ast\beta)$, we get by RAGM that 
$B(\E\ast(\lnot\alpha\wedge\beta))=B(\E\ast\beta)$, from which it follows that
$B(\E\ast\alpha\ast\beta)=B(\E\ast\beta)$. 
For (C2)$\Leftarrow$(C2D), suppose that $\beta\vDash\lnot\alpha$. Then
$\alpha \counteracts_{\E} \beta$ for any $\E$ and by $B(\E\ast\alpha\ast\beta)=B(\E\ast\beta)$
by (C2D). For (D)$\Leftarrow$(C2D), suppose that 
$\alpha \counteracts_{\E} \beta$. Then $B(\E\ast\alpha\ast\beta)=B(\E\ast\beta)$ by (C2D) and since
$\lnot\alpha\in B(\E\ast\beta)$, it follows that $\lnot\alpha\in B(\E\ast\alpha\ast\beta)$.
\end{proof}

\noindent Both (C1P) and (C2D) provide conditions for the reduction of the two-step revision sequence
$\E\ast\alpha\ast\beta$ to a single-step revision (if only as regards the resulting knowledge base). (C1P)  reduces it to an ($\alpha\wedge\beta$)-revision when $\alpha$ is consistent with a $\beta$-revision.
(C2D) reduces it to a $\beta$-revision, ignoring $\alpha$ completely,
when $\alpha$ and $\beta$ counteract with respect to $\E$.
Now, it follows from RAGM that  the consequent of (C1P) also obtains when
$\neg\beta \not\in B(\E \ast \alpha)$. Putting this together we get a most succinct characterisation of
restrained revision.
\begin{proposition}
\label{succinct}%
Only restrained revision satisfies RAGM and:
\begin{description}
\item[(T)] $B(\E \ast \alpha \ast \beta) =
\left\{
\begin{array}{ll}
B(\E \ast \beta) & \textrm{if } \alpha \counteracts_{\E} \beta \\
B(\E \ast (\alpha \wedge \beta)) & \textrm{otherwise.}
\end{array}
\right. $
\end{description}
\end{proposition}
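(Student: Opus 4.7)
The plan is to reduce (T) to the already-established combination (C1P)\,$+$\,(C2D), and then invoke Proposition \ref{compact} and Theorem \ref{Thm:restrained} to finish. So I would prove, as the central lemma, that given RAGM, the single rule (T) is equivalent to (C1P)\,$+$\,(C2D). Once this equivalence is in hand, Proposition \ref{compact} converts (C1P)\,$+$\,(C2D) to (C1)\,$+$\,(C2)\,$+$\,(P)\,$+$\,(D), and Theorem \ref{Thm:restrained} then identifies the resulting class of operators with restrained revision.

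For (T)\,$\Rightarrow$\,(C1P)\,$+$\,(C2D), both directions are essentially immediate from the definition of $\counteracts_\E$. The first clause of (T) is literally (C2D). For (C1P), note that if $\neg\alpha \notin B(\E \ast \beta)$, then one of the two conjuncts defining $\alpha \counteracts_\E \beta$ fails, so $\alpha \not\counteracts_\E \beta$ and the second clause of (T) yields $B(\E \ast \alpha \ast \beta) = B(\E \ast (\alpha \wedge \beta))$.

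The more interesting direction is (C1P)\,$+$\,(C2D)\,$\Rightarrow$\,(T). Case-split on whether $\alpha \counteracts_\E \beta$. If yes, (C2D) gives the first clause of (T) directly. If no, then the negation of $\alpha \counteracts_\E \beta$ splits further into the sub-case $\neg\alpha \notin B(\E \ast \beta)$ and the sub-case $\neg\beta \notin B(\E \ast \alpha)$. The first sub-case is exactly the antecedent of (C1P), which delivers $B(\E \ast \alpha \ast \beta) = B(\E \ast (\alpha \wedge \beta))$. The second sub-case is the step that needs a little more care but uses only RAGM: applying ($\E\ast 3$) and ($\E\ast 4$) to the epistemic state $\E \ast \alpha$ gives $B(\E \ast \alpha \ast \beta) = B(\E \ast \alpha) + \beta$, and then applying ($\E\ast 7$) and ($\E\ast 8$) to $\E$ itself gives $B(\E \ast \alpha) + \beta = B(\E \ast (\alpha \wedge \beta))$; chaining these equalities yields the desired conclusion. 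This is exactly the parenthetical remark made just before the statement of Proposition \ref{succinct}, so it is the point where I would be most careful to spell out the RAGM-only argument, since it is the only place where the proof uses something beyond the previously isolated rules.

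Having established the equivalence of RAGM\,$+$\,(T) with RAGM\,$+$\,(C1P)\,$+$\,(C2D), the finish is mechanical: Proposition \ref{compact}(1) gives RAGM\,$+$\,(C1P) $\Leftrightarrow$ RAGM\,$+$\,(C1)\,$+$\,(P), Proposition \ref{compact}(2) gives RAGM\,$+$\,(C2D) $\Leftrightarrow$ RAGM\,$+$\,(C2)\,$+$\,(D), and Theorem \ref{Thm:restrained} identifies RAGM\,$+$\,(C1)\,$+$\,(C2)\,$+$\,(P)\,$+$\,(D) with restrained revision. Uniqueness (``only'') then follows because RAGM\,$+$\,(RR) fixes $\preceq_{\E \ast \alpha}$ uniquely given $\preceq_\E$, as noted in the paragraph introducing Definition of restrained revision.
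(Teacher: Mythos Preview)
Your proposal is correct and follows essentially the same route as the paper: both reduce the claim to showing that RAGM\,$+$\,(T) is equivalent to RAGM\,$+$\,(C1P)\,$+$\,(C2D), then appeal to Proposition~\ref{compact} and Theorem~\ref{Thm:restrained}. Your case analysis for the equivalence matches the paper's exactly, and you spell out in slightly more detail the RAGM-only step (via ($\E\ast 3$)--($\E\ast 4$) and ($\E\ast 7$)--($\E\ast 8$)) that the paper leaves implicit.
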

\begin{proof}
From Theorem \ref{Thm:restrained} and Proposition \ref{compact} it is sufficient
to show that RAGM, (C1P) and (C2D) hold iff RAGM and (T) hold. So, suppose
that $\ast$ satisfies RAGM and (T). (C1P) follows from the bottom part of (T), 
while (C2D) follows from the top part. Conversely, suppose that $\ast$ 
satisfies RAGM, (C1P) and (C2D). If $\alpha \counteracts_{\E} \beta$ it follows
from (C2D) that $B(\E\ast\alpha\ast\beta)=B(\E\ast\beta)$. If not, we consider two 
cases. If $\lnot\alpha\notin B(\E\ast\beta)$  it follows from (C1P) that 
$B(\E\ast\alpha\ast\beta)=B(\E\ast(\alpha\wedge\beta))$.  
Otherwise it has to be the case that  $\lnot\beta\notin B(\E\ast\alpha)$.
But then it follows from RAGM that $B(\E\ast\alpha\ast\beta)=B(\E\ast(\alpha\wedge\beta))$.  
\end{proof}

\noindent If we were to replace ``$\alpha \counteracts_{\E} \beta$'' in the
first clause in (T) by the stronger ``$\alpha$ and $\beta$ are
logically inconsistent'', we would obtain instead the
characterisation of lexicographic revision given by Nayak et al.
\citeyear{Nayak-ea:2003a}.

Proposition \ref{succinct} allows us to see clearly another significant property of restrained
revision. For if $\alpha \counteracts_\E \beta$ then we know $\neg\alpha \in B(\E * \alpha * \beta)$
directly from (D), while if $\alpha \not\counteracts_\E \beta$ then Proposition \ref{succinct} tells us
$B(\E * \alpha * \beta) = B(\E * (\alpha \wedge \beta))$
and so $\alpha \in B(\E * \alpha * \beta)$ by RAGM. Thus we see in the state $\E * \alpha * \beta$ the epistemic
status of $\alpha$ (either accepted or rejected) is {\em always} completely determined, i.e., we have proved:
\begin{proposition}
Restrained revision satisfies the following property:
\begin{description}
\item[(U)] If  $\neg\alpha \not\in B(\E * \alpha * \beta)$ then $\alpha \in B(\E * \alpha * \beta)$
\end{description}
\end{proposition}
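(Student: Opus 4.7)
The plan is to prove (U) by a case analysis on whether $\alpha \counteracts_\E \beta$ holds. The guiding idea, already foreshadowed in the paragraph preceding the proposition, is that in each of the two cases we can pin down the epistemic status of $\alpha$ in $B(\E\ast\alpha\ast\beta)$ outright, so that at least one of $\alpha$ or $\lnot\alpha$ is always a member of that knowledge base; (U) then drops out as an immediate consequence.

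In the first case, suppose $\alpha \counteracts_\E \beta$. Since restrained revision satisfies (D) by Theorem~\ref{Thm:restrained}, we obtain $\lnot\alpha \in B(\E\ast\alpha\ast\beta)$ directly, and hence the implication in (U) holds vacuously.

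For the remaining case, $\alpha \ncounteracts_\E \beta$, I would invoke the succinct characterisation from Proposition~\ref{succinct}: the second clause of (T) gives $B(\E\ast\alpha\ast\beta) = B(\E\ast(\alpha\wedge\beta))$, and then ($\E\ast 2$) from RAGM yields $\alpha\wedge\beta \in B(\E\ast(\alpha\wedge\beta))$, whence $\alpha \in B(\E\ast\alpha\ast\beta)$, establishing the conclusion of (U). I do not anticipate a genuine obstacle here: the real work has been done in Theorem~\ref{Thm:restrained} (which delivers (D) for restrained revision) and in Proposition~\ref{succinct} (which delivers the reduction (T)). The only point worth underlining is that the two clauses of (T) are jointly exhaustive, so the binary case split on $\alpha \counteracts_\E \beta$ covers every situation in which $\E\ast\alpha\ast\beta$ is considered.
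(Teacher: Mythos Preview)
Your proposal is correct and follows essentially the same argument as the paper: a case split on whether $\alpha \counteracts_\E \beta$, invoking (D) in the first case and the second clause of (T) from Proposition~\ref{succinct} together with RAGM in the second. The paper presents this reasoning in the paragraph immediately preceding the proposition, so there is nothing to add.
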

(Given its similar characterisation just mentioned above, it is easy to see lexicographic revision satisfies
(U) too.) Like (P), property (U) can be read as providing conditions under which the penultimate
revision input $\alpha$ should be believed. Its antecedent is simply saying $B(\E * \alpha * \beta)$ is
{\em consistent} with $\alpha$. Thus (U) is saying the penultimate input should be believed {\em as long as it is 
consistent to do so}. By chaining (U) together with (C4), we easily see that (U) actually {\em implies} (P)
in the presence of (C4). As a consequence, we obtain the following alternative axiomatic characterisation of restrained
revision.
\begin{theorem}
RAGM, (C1), (C2), (C4), (U) and (D) provide an exact characterisation of restrained revision.
\end{theorem}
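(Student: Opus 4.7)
The plan is to reduce this theorem to the earlier characterisation in Theorem \ref{Thm:restrained}, which says that RAGM, (C1), (C2), (P) and (D) exactly characterise restrained revision. Since (C1), (C2), (D) and RAGM are common to both sets of postulates, the whole task boils down to showing that in the presence of the rest, (C4) together with (U) is interchangeable with (P).

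For the soundness direction (restrained revision satisfies the new axiom set), I would invoke Theorem \ref{Thm:restrained} to get RAGM, (C1), (C2), (D) for free. The proposition immediately preceding the theorem gives (U). For (C4), I would observe that restrained revision satisfies (P) (again by Theorem \ref{Thm:restrained}), and (P) is strictly stronger than (C4) combined with (C3), as the excerpt has already noted; alternatively one can read off (C4) directly from (PR).

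For the completeness direction, I would take any operator $\ast$ satisfying RAGM, (C1), (C2), (C4), (U), (D) and prove it is restrained revision by deriving (P) and then appealing to Theorem \ref{Thm:restrained}. The derivation of (P) is the short chaining argument hinted at in the paragraph just above the theorem: assume the antecedent of (P), i.e.\ $\lnot\alpha \notin B(\E\ast\beta)$; then (C4) delivers $\lnot\alpha \notin B(\E\ast\alpha\ast\beta)$; and (U) converts this into $\alpha \in B(\E\ast\alpha\ast\beta)$, which is exactly the consequent of (P). Once (P) is in hand, RAGM, (C1), (C2), (P), (D) hold, so Theorem \ref{Thm:restrained} identifies $\ast$ as restrained revision.

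There is essentially no obstacle here: the only non-routine observation is the two-line (C4)+(U)$\Rightarrow$(P) chain, which the surrounding discussion has already flagged. The proof can thus be written in a few lines, and its main content is simply citing Theorem \ref{Thm:restrained} and the preceding proposition on (U).
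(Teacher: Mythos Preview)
Your proposal is correct and matches the paper's approach exactly: the paper states this theorem immediately after observing that chaining (C4) with (U) yields (P), and leaves the proof implicit since it then reduces directly to Theorem~\ref{Thm:restrained} and the preceding proposition on (U). Your write-up makes explicit precisely the argument the paper sketches.
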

For (U), we are also able to provide a simple semantic counterpart property. It corresponds
to a {\em separating} of all the $\alpha$-worlds from all the $\neg\alpha$-worlds in the
total preorder $\preceq_{\E * \alpha}$ following an $\alpha$-revision, in that each
plausibility level in $\preceq_{\E * \alpha}$ either contains only $\alpha$-worlds
or contains only $\neg\alpha$-worlds:
\begin{proposition}
Whenever a revision operator $*$ satisfies RAGM, then $*$ satisfies (U) iff it satisfies the following
property: 
\begin{description}
\item[(UR)] For $v \in [\alpha]$ and $w \in [\neg\alpha]$, either $v \prec_{\E * \alpha} w$ or $w \prec_{\E *\alpha} v$
\end{description}
\end{proposition}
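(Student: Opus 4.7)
The plan is to prove the equivalence in both directions by the standard trick of choosing, for given valuations $v$ and $w$, a sentence $\beta$ whose model set is exactly $\{v,w\}$ (possible since $L$ is finitely generated). Throughout, the key observation is that RAGM guarantees $[B(\E \ast \alpha \ast \beta)] = \min(\beta, \preceq_{\E\ast\alpha})$, so membership of a valuation in $[B(\E \ast \alpha \ast \beta)]$ can be read directly off the total preorder $\preceq_{\E\ast\alpha}$.

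For (UR)$\Rightarrow$(U), I would assume $\neg\alpha \notin B(\E\ast\alpha\ast\beta)$, which means $[B(\E\ast\alpha\ast\beta)]$ contains some $v \in [\alpha]$. Suppose for contradiction that $\alpha \notin B(\E\ast\alpha\ast\beta)$, so there also exists $w \in [\neg\alpha] \cap [B(\E\ast\alpha\ast\beta)]$. Since both $v$ and $w$ are $(\preceq_{\E\ast\alpha})$-minimal among $\beta$-worlds, we have $v \preceq_{\E\ast\alpha} w$ and $w \preceq_{\E\ast\alpha} v$, contradicting (UR).

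For (U)$\Rightarrow$(UR), I would argue contrapositively: assume there exist $v \in [\alpha]$ and $w \in [\neg\alpha]$ with neither $v \prec_{\E\ast\alpha} w$ nor $w \prec_{\E\ast\alpha} v$, so by totality $v \preceq_{\E\ast\alpha} w$ and $w \preceq_{\E\ast\alpha} v$. Choose $\beta$ with $[\beta] = \{v,w\}$. Then $\min(\beta, \preceq_{\E\ast\alpha}) = \{v,w\}$, and hence by RAGM $[B(\E\ast\alpha\ast\beta)] = \{v,w\}$. Since $v \in [\alpha]$, we have $\neg\alpha \notin B(\E\ast\alpha\ast\beta)$, and since $w \in [\neg\alpha]$, we have $\alpha \notin B(\E\ast\alpha\ast\beta)$; this contradicts (U).

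There is no real obstacle here; the only things to be careful about are invoking RAGM to convert between syntactic membership in $B(\E\ast\alpha\ast\beta)$ and semantic membership in $\min(\beta, \preceq_{\E\ast\alpha})$, and ensuring the existence of a $\beta$ with $[\beta] = \{v,w\}$, which is immediate from the finite generation of $L$.
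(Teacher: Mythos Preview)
Your proposal is correct and essentially identical to the paper's own proof: both directions are handled by contraposition, exploiting the RAGM identity $[B(\E\ast\alpha\ast\beta)] = \min(\beta,\preceq_{\E\ast\alpha})$ and, for the (U)$\Rightarrow$(UR) direction, choosing a $\beta$ with $[\beta]=\{v,w\}$. The only cosmetic difference is that you frame (UR)$\Rightarrow$(U) as a direct argument with an inner contradiction, whereas the paper states it as the contrapositive from the outset.
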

\begin{proof}
For (U)$\Rightarrow$(UR) suppose (UR) doesn't hold, i.e., there exist $\alpha$, $v \in [\alpha]$ and
$w \in [\neg\alpha]$ such that both $v \preceq_{\E * \alpha} w$ and $w \preceq_{\E * \alpha} v$. Letting
$\beta$ be such that $[\beta] = \{v,w\}$ we get $[B(\E * \alpha * \beta)] = \{v,w\}$ from RAGM and thus both
$\neg\alpha, \alpha \not\in B(\E * \alpha * \beta)$ (because of $v,w \in [B(\E * \alpha * \beta)]$
respectively). Hence (U) doesn't hold.

For (U)$\Leftarrow$(UR) suppose (U) doesn't hold, i.e., there exist $\alpha, \beta$ such that both
$\neg\alpha, \alpha \not\in B(\E * \alpha * \beta)$. Then there exist $v \in [\alpha]$ and $w \in [\neg\alpha]$
such that $v,w \in [B(\E * \alpha * \beta)] =$ (by RAGM) $\min(\beta, \preceq_{\E * \alpha})$. Since both $v$ and
$w$ are $(\preceq_{\E * \alpha})$-minimal $\beta$-worlds we must have both $v \preceq_{\E * \alpha} w$ and
$w \preceq_{\E * \alpha} v$. Hence $\alpha, v, w$ give a counterexample to (UR).
\end{proof}

Finally in this section we turn to two properties first mentioned (as far as we know) 
by Schlecta et al. \citeyear{Schlechta-ea:96a} (see also the work of Lehmann et al. \citeyear{Lehmann-ea:2001a}):
\begin{description}
\item[(Disj1)]
$B(\E \ast \alpha \ast \beta) \cap B(\E \ast \gamma \ast \beta) \subseteq B(\E \ast (\alpha \vee \gamma) \ast \beta)$
\item[(Disj2)]
$B(\E \ast (\alpha \vee \gamma) \ast \beta) \subseteq B(\E \ast \alpha \ast \beta) \cup B(\E \ast \gamma \ast \beta)$
\end{description}
(Disj1) says that if a sentence is believed after any one of two sequences of revisions that
differ only at step $i$ (step $i$ being $\alpha$ in
one case and $\gamma$ in the other), then the sentence should also be believed after that
sequence which differs from both
only in that step $i$ is a revision by the disjunction $\alpha \vee \gamma$.
%the information common to an $\alpha$-$\beta$-revision and a
%$\gamma$-$\beta$-revision is contained in an  $(\alpha\vee\gamma)$-$\beta$-revision.
Similarly, (Disj2) says that every sentence believed after an $(\alpha\vee\gamma)$-$\beta$-revision
should be believed after at least one of
%is contained in the
%combined consequences of  an
$(\alpha$-$\beta)$ and $(\gamma$-$\beta)$.
Both conditions are reasonable properties to expect of revision operators.
\begin{proposition}
\label{disj}%
Restrained revision satisfies (Disj1) and (Disj2).
\end{proposition}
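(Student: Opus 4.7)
The plan is to use the succinct characterisation (T) of restrained revision from Proposition~\ref{succinct}, together with the disjunction properties of $\counteracts_\E$ in Proposition~\ref{counteractsprops}, to reduce each of the three two-step revisions $\E\ast\alpha\ast\beta$, $\E\ast\gamma\ast\beta$ and $\E\ast(\alpha\vee\gamma)\ast\beta$ to a one-step revision, and then to do a case analysis on which of $\alpha\counteracts_\E\beta$ and $\gamma\counteracts_\E\beta$ hold.

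If both $\alpha$ and $\gamma$ counteract $\beta$ with respect to $\E$, part~(i) of Proposition~\ref{counteractsprops} forces $(\alpha\vee\gamma)\counteracts_\E\beta$, so (T) collapses all three knowledge bases to $B(\E\ast\beta)$, making (Disj1) and (Disj2) trivial. If neither counteracts $\beta$, part~(ii) yields $(\alpha\vee\gamma)\not\counteracts_\E\beta$; (T) then rewrites the three knowledge bases as $B(\E\ast(\alpha\wedge\beta))$, $B(\E\ast(\gamma\wedge\beta))$ and $B(\E\ast((\alpha\wedge\beta)\vee(\gamma\wedge\beta)))$ respectively (using RAGM to replace $(\alpha\vee\gamma)\wedge\beta$), and the two required inclusions reduce to the standard RAGM consequences $B(\E\ast\phi)\cap B(\E\ast\psi)\subseteq B(\E\ast(\phi\vee\psi))\subseteq B(\E\ast\phi)\cup B(\E\ast\psi)$ already invoked in the proof of Proposition~\ref{counteractsprops}.

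The only remaining case, and the main obstacle, is the mixed one, where (without loss of generality) $\alpha\counteracts_\E\beta$ but $\gamma\not\counteracts_\E\beta$; here Proposition~\ref{counteractsprops} leaves the status of $(\alpha\vee\gamma)\counteracts_\E\beta$ open. If the disjunction does counteract $\beta$, then (T) gives $B(\E\ast(\alpha\vee\gamma)\ast\beta) = B(\E\ast\beta) = B(\E\ast\alpha\ast\beta)$ and both inclusions are again trivial. Otherwise I will argue semantically that $\min((\alpha\vee\gamma)\wedge\beta,\preceq_\E) = \min(\gamma\wedge\beta,\preceq_\E)$: from $\alpha\counteracts_\E\beta$ and Proposition~\ref{counter} the minimal $\alpha\wedge\beta$-worlds sit at a strictly higher $\preceq_\E$-level than the minimal $\beta$-worlds, while $(\alpha\vee\gamma)\not\counteracts_\E\beta$ supplies either a $\gamma\wedge\beta$-world at the level of $\min(\beta,\preceq_\E)$ or one at the level of $\min(\alpha\vee\gamma,\preceq_\E)$; a short split on these two possibilities (and on whether $k_\alpha \leq k_\gamma$ or not, where $k_\phi$ denotes the lowest $\preceq_\E$-level containing a $\phi$-world) shows that the minimal $(\alpha\vee\gamma)\wedge\beta$-worlds lie strictly below every minimal $\alpha\wedge\beta$-world and must therefore coincide with the minimal $\gamma\wedge\beta$-worlds. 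This yields $B(\E\ast(\alpha\vee\gamma)\ast\beta) = B(\E\ast\gamma\ast\beta)$, and (Disj1), (Disj2) are then immediate. The delicate step throughout is this final semantic min-set equality, but the counteracts constraints push the $\alpha\wedge\beta$-worlds high enough that they cannot interfere with the minimal $(\alpha\vee\gamma)\wedge\beta$-worlds, so the case split closes out cleanly.
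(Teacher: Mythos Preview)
Your proposal is correct and follows essentially the same route as the paper: reduce via (T), invoke Proposition~\ref{counteractsprops}, handle the ``neither counteracts'' case with the RAGM disjunction facts, and in the mixed case prove that $B(\E\ast((\alpha\vee\gamma)\wedge\beta)) = B(\E\ast(\gamma\wedge\beta))$. The paper merely organises the outer split differently (it branches first on $(\alpha\vee\gamma)\counteracts_\E\beta$ rather than on the individual counteracts) and isolates your semantic min-set equality as a separate Lemma~\ref{extraprop}, proving it by a short contradiction: if some $\alpha$-world $w$ lay in $\min((\alpha\vee\gamma)\wedge\beta,\preceq_\E)$ then $w\in\min(\alpha\wedge\beta,\preceq_\E)$, and Proposition~\ref{counter} applied to $\alpha\counteracts_\E\beta$ yields an $\alpha$-world and a $\beta$-world strictly below $w$, forcing $(\alpha\vee\gamma)\counteracts_\E\beta$ --- this is a bit cleaner than your level-comparison sketch (and note that your intermediate claim that $(\alpha\vee\gamma)\not\counteracts_\E\beta$ directly supplies a \emph{$\gamma\wedge\beta$}-world is slightly premature: it only supplies an $(\alpha\vee\gamma)\wedge\beta$-world, and the fact that it must be a $\gamma$-world is exactly what the contradiction argument establishes).
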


To prove this result we will make use of the properties of the counteracts relation given in
Proposition \ref{counteractsprops}, along with the following lemma.

\begin{lemma}
\label{extraprop}
If $\alpha \counteracts_{\E} \beta$ and
$(\alpha \vee \gamma) \not\counteracts_{\E} 
\beta$ then $B(\E * ((\alpha \vee \gamma) \wedge \beta)) = B(\E * (\gamma \wedge \beta))$
\end{lemma}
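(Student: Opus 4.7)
The plan is to reduce the claim to a semantic equality of minimum sets and then to exploit the characterisation of counteracts given by Proposition~\ref{counter}. Since RAGM guarantees $[B(\E \ast \delta)] = \min(\delta, \preceq_\E)$ for every $\delta$, the target equation $B(\E \ast ((\alpha \vee \gamma) \wedge \beta)) = B(\E \ast (\gamma \wedge \beta))$ is equivalent to
\[
\min((\alpha \vee \gamma) \wedge \beta, \preceq_\E) = \min(\gamma \wedge \beta, \preceq_\E).
\]
Because $[(\alpha \vee \gamma) \wedge \beta] = [\alpha \wedge \beta] \cup [\gamma \wedge \beta]$, it suffices to locate the common minimum level of the two sides at that of $\min(\gamma \wedge \beta, \preceq_\E)$ and then evict from this level any $\alpha \wedge \beta$-world that is not a $\gamma$-world.

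From $\alpha \counteracts_\E \beta$, Proposition~\ref{counter} supplies witnesses $v_0 \in [\alpha]$ and $w_0 \in [\beta]$ such that $v_0 \prec_\E z$ and $w_0 \prec_\E z$ for every $z \in \min(\alpha \wedge \beta, \preceq_\E)$. Since $v_0 \in [\alpha \vee \gamma]$, instantiating the negation of the counteracts condition for $\alpha \vee \gamma$ and $\beta$ (again read off Proposition~\ref{counter}) at the pair $v_0, w_0$ yields some $z^{\ast} \in \min((\alpha \vee \gamma) \wedge \beta, \preceq_\E)$ satisfying $z^{\ast} \preceq_\E v_0$ or $z^{\ast} \preceq_\E w_0$. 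Now $z^{\ast}$ must lie in $[\gamma \wedge \beta]$: were $z^{\ast} \in [\alpha \wedge \beta]$, its minimality in the union would force $z^{\ast} \in \min(\alpha \wedge \beta, \preceq_\E)$ (any strictly lower $\alpha \wedge \beta$-world would also be a strictly lower $(\alpha \vee \gamma) \wedge \beta$-world), whence $v_0 \prec_\E z^{\ast}$ and $w_0 \prec_\E z^{\ast}$, contradicting the choice of $z^{\ast}$. Consequently $z^{\ast}$ is $\preceq_\E$-minimal in $\gamma \wedge \beta$ as well, so $\min((\alpha \vee \gamma) \wedge \beta, \preceq_\E)$ and $\min(\gamma \wedge \beta, \preceq_\E)$ share the plausibility level of $z^{\ast}$.

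It remains to rule out any $u \in [\alpha \wedge \beta] \setminus [\gamma]$ on that level. By the same minimality argument as above, such a $u$ would lie in $\min(\alpha \wedge \beta, \preceq_\E)$, and hence $v_0 \prec_\E u$ and $w_0 \prec_\E u$; chaining with either $z^{\ast} \preceq_\E v_0$ or $z^{\ast} \preceq_\E w_0$ gives $z^{\ast} \prec_\E u$, contradicting the assumption that $u$ sits at $z^{\ast}$'s level. This establishes $\min((\alpha \vee \gamma) \wedge \beta, \preceq_\E) = \min(\gamma \wedge \beta, \preceq_\E)$ and the lemma follows. The only delicate point is the dual use of the counteracts witnesses $v_0$ and $w_0$—first to confine $z^{\ast}$ to $[\gamma \wedge \beta]$, and then to banish any stray $\alpha \wedge \beta$-world from the common minimum level—both steps hinging on nothing more than transitivity of $\preceq_\E$.
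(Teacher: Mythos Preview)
Your proof is correct and rests on the same core observation as the paper's: any $\alpha$-world minimal in $[(\alpha \vee \gamma) \wedge \beta]$ would also be minimal in $[\alpha \wedge \beta]$, and the counteracts witnesses from Proposition~\ref{counter} then contradict $(\alpha \vee \gamma) \not\counteracts_\E \beta$. The paper's route is somewhat more direct: it shows in one stroke (by contradiction) that $\neg\alpha \in B(\E \ast ((\alpha \vee \gamma) \wedge \beta))$, i.e., that \emph{no} $\alpha$-world lies in $\min((\alpha \vee \gamma) \wedge \beta, \preceq_\E)$, and then lets RAGM finish the job. Your argument reaches the same destination but splits the work in two---first isolating a particular $z^\ast$ and showing it avoids $[\alpha \wedge \beta]$, then separately evicting any other $u \in [\alpha \wedge \beta]$ from the minimum level---which is effectively running the same contradiction twice. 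The detour through a specific $z^\ast$ is not wrong, just unnecessary: the second step alone (applied to an arbitrary minimal $u$, using the witnesses $v_0, w_0$ directly against Proposition~\ref{counter} for $\alpha \vee \gamma$ and $\beta$) would already yield $\min((\alpha \vee \gamma) \wedge \beta, \preceq_\E) \cap [\alpha] = \emptyset$, from which the equality of the two minimum sets follows immediately.
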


\begin{proof}
Suppose $\alpha \counteracts_{\E} \beta$ and
$(\alpha \vee \gamma) \not\counteracts_{\E} 
\beta$. We will first show that this implies $\neg\alpha \in B(\E * ((\alpha \vee \gamma) \wedge \beta))$. We will then be
able to conclude the required $B(\E * ((\alpha \vee \gamma) \wedge \beta)) = B(\E * (\gamma \wedge \beta))$ using RAGM. So
suppose on the contrary $\neg\alpha \not\in B(\E * ((\alpha \vee \gamma) \wedge \beta))$. Then there exists some 
$\alpha$-world $w \in
\min((\alpha \vee \gamma) \wedge \beta, \preceq_\E)$. Then also $w \in \min(\alpha \wedge \beta, \preceq_\E)$. Since 
$\alpha \counteracts_{\E} \beta$ we know from Proposition \ref{counter} 
there exist an $\alpha$-world $w_1$ and a $\beta$-world $w_2$ such that
$w_i \prec_\E w$ for $i = 1,2$. Clearly $w_1$ is also a $(\alpha \vee \gamma)$-world, so we infer $(\alpha \vee \gamma) \not\counteracts_{\E} 
\beta$ -- contradiction. Hence $\neg\alpha \in B(\E * ((\alpha \vee \gamma) \wedge \beta))$ as required.
\end{proof}

\begin{proof}[of Proposition \ref{disj}]
We prove both properties simultaneously by looking at two cases: \\
\underline{Case {\em (i)}: $(\alpha \vee \gamma) \counteracts_\E \beta$.}
In this case $B(\E * (\alpha \vee \gamma) * \beta) = B(\E * \beta)$ by property (T) in Proposition \ref{succinct}. Meanwhile 
we know from Proposition \ref{counteractsprops}{\em (ii)} that either 
$\alpha \counteracts_\E \beta$ or $\gamma \counteracts_\E \beta$, and so using (T) again we know
at least one of $B(\E * \alpha * \beta)$ and $B(\E * \gamma * \beta)$ must also be equal to $B(\E * \beta)$. Hence we
see both (Disj1) and (Disj2) hold in this case.

\noindent
\underline{Case {\em (ii)}: $(\alpha \vee \gamma) \not\counteracts_\E \beta$.} In this case (T) tells
us $B(\E * (\alpha \vee \gamma) * \beta) = B(\E * ((\alpha \vee \gamma) \wedge \beta)) =
B(\E * ((\alpha \wedge \beta) \vee (\gamma \wedge \beta))$. Meanwhile Proposition \ref{counteractsprops}{\em (i)} tells us at
least one of $\alpha \not\counteracts_\E \beta$ and $\gamma \not\counteracts_\E \beta$ holds. We now
consider two subcases according to which either both these hold, or only one holds. If both these hold then
$B(\E * \alpha * \beta) = B(\E * (\alpha \wedge \beta))$ and $B(\E * \gamma * \beta) = B(\E * (\gamma \wedge \beta))$, so
(Disj1) and (Disj2) reduce to
\begin{description}
\item[(Disj1$'$)]
$B(\E \ast (\alpha \wedge \beta)) \cap B(\E \ast (\gamma \wedge \beta)) \subseteq B(\E \ast ((\alpha \wedge \beta)
\vee (\gamma \wedge \beta)))$

\item[(Disj2$'$)]
$B(\E \ast ((\alpha \wedge \beta)
\vee (\gamma \wedge \beta))) \subseteq B(\E \ast (\alpha \wedge \beta)) \cup B(\E \ast (\gamma \wedge \beta))$
\end{description}
respectively. Now it is a consequence of RAGM that for {\em any} sentences $\theta, \phi$ we have both
$(1)$ $B(\E * \theta) \cap B(\E * \phi) \subseteq B(\E * (\theta \vee \phi))$ and $(2)$
$B(\E * (\theta \vee \phi)) \subseteq B(\E * \theta) \cup B(\E * \phi)$. Substituting $\alpha \wedge \beta$ for $\theta$
and $\gamma \wedge \beta$ for $\phi$ here gives us the required (Disj1$'$) (from $(1)$) and (Disj2$'$) (from $(2)$).

Now let's consider the subcase where $\alpha \counteracts_\E \beta$ and $\gamma \not\counteracts_\E \beta$. (A symmetric
argument will work for the other subcase $\alpha \not\counteracts_\E \beta$ and $\gamma \counteracts_\E \beta$.)
Then from $\gamma \not\counteracts_\E \beta$ we get $B(\E * \gamma * \beta) = B(\E * (\gamma \wedge \beta))$, while
from $\alpha \counteracts_\E \beta$ together with $(\alpha \vee \gamma) \not\counteracts_\E \beta$ we get
also $B(\E * (\alpha \vee \gamma) * \beta) = B(\E * (\gamma \wedge \beta))$ using Lemma \ref{extraprop}.
So in this case $B(\E * (\alpha \vee \gamma) * \beta) = B(\E * \gamma * \beta)$, from which both (Disj1) and (Disj2)
follow immediately.
\end{proof}

\noindent
We end this section by remarking that it can be shown that lexicographic revision also satisfies (Disj1) and (Disj2).

\section{Restrained Revision as a Composite Operator}
\label{Sec:Composite}

As we saw in Section \ref{Sec:admissible}, Boutilier's natural revision operator -- let us denote it
in this section by $\oplus$ -- is vulnerable to damaging counterexamples such as the red bird Example \ref{Ex:red-bird},
and fails to satisfy the very
reasonable postulate (P). Although a new input $\alpha$ is accepted in the very next
 epistemic state $\E \oplus \alpha$, $\oplus$ does not in any way provide for
the preservation of $\alpha$ after subsequent revisions. As Hans
Rott \citeyear[p.128]{Rott:2003a} describes it, ``[t]he most recent
input sentence is always embraced without reservation, the last but
one input sentence, however, is treated with utter disrespect''.
Thus, there seem to be convincing reasons to reject $\oplus$ as a
viable operator for performing iterated revision. However, the
literature on epistemic state change constantly reminds us that
keeping changes {\em minimal} should be a major concern, and when
judged from a purely {\em minimal change} viewpoint, it is clear
that $\oplus$ can't be beaten! How can we find our way out of this
apparent quandary? In this section we show that the use of $\oplus$
can be retained, {\em provided} its application is preceded by an
{\em intermediate} operation in which, rather than revising $\E$
{\em by} new input $\alpha$, essentially {\em $\alpha$ is revised by
$\E$}.

%The problem with $\oplus$ is that
%Intuitively, in this intermediate stage the influence of the initial epistemic state $\E$ is allowed to wash over the new %input $\alpha$, but
%without drowning $\alpha$ out {\em completely}. This will be enough to ensure $\alpha$ has more of a say in determining
%the results of subsequent revisions.

Given an epistemic state $\E$ and sentence $\alpha$, let us denote by $\E \tl \alpha$ the result of this intermediate
operation. $\E \tl \alpha$ is an epistemic state.
%In a sense, $\E \tl \alpha$ can be viewed as {\em reverse revision}. It is
%the result of {\em revising} new input $\alpha$ {\em by} $\E$.
The idea is that when forming $\E \tl \alpha$, the information in $\E$ should be maintained. That is, the
total preorder $\preceq_{\E \tl \alpha}$ should satisfy
\begin{equation}
\label{dominate}
v \prec_{\E} w\
\textrm{implies}\
v \prec_{\E \tl \alpha} w.
\end{equation}
But rather than leaving behind $\alpha$ {\em entirely} in favour of
$\E$, as much of the informational content of $\alpha$ should be
preserved in $\E \tl \alpha$ as possible. This is formalised by
saying that for any $v \in [\alpha], w \in [\neg\alpha]$, we should
take $v \prec_{\E \tl \alpha} w$ as long as this does not conflict
with (\ref{dominate}) above. It is this second requirement which
will guarantee $\alpha$ enough of a ``presence'' in the revised
epistemic state $\E \ast \alpha$ to help it survive subsequent
revisions and allow (P) to be captured. Taken together, the above
two requirements are enough to specify $\preceq_{\E \tl \alpha}$
uniquely:
\begin{equation}
\label{order}
%\begin{array}{l}
v\preceq _{\E \tl \alpha}w\textrm{ iff }\left\{
\begin{array}{l}
v\prec _{\E}w\textrm{, or}\\
v\preceq _{\E}w\textrm{ and }(v\in [\alpha]\textrm{ or }w\in [\neg\alpha])\textrm{.} 

\end{array}
\right.
\end{equation}
Thus, $\preceq_{\E \tl \alpha}$ is just the lexicographic refinement
of $\preceq_{\E}$ by the ``two-level'' total preorder $\preceq_\alpha$
defined by $v \preceq_\alpha w$ iff $v \in [\alpha]$ or $w \in
[\neg\alpha]$. This ``backwards revision'' operator is not new. It
has been studied by Papini \citeyear{Papini:98b}. It can also
be viewed as just a ``backwards'' version of Nayak's lexicographic
revision operator. 
We do not necessarily have $\alpha \in B(\E \tl
\alpha)$ (this will hold only if $\neg\alpha \not\in B(\E)$), and so
$\tl$ does not satisfy RAGM.

Given $\tl$, we can define the composite revision operator $\ast_\tl$ by setting
\begin{equation}
\label{itlevi}
\E \ast_\tl \alpha = (\E \tl \alpha) \oplus \alpha
\end{equation}
This is reminiscent of the Levi Identity \cite{gardenfors:88a}, used
in AGM theory as a recipe for reducing the operation of revision on
{\em knowledge bases} to a composite operation consisting of
contraction plus expansion. In (\ref{itlevi}), $\oplus$ is playing
the role of expansion. The operator $\ast_\tl$ {\em does} satisfy
RAGM. In fact, as can easily be seen by comparing (\ref{order})
above with condition (RR) at the start of Section 4, $\ast_\tl$
coincides with restrained revision.
\begin{proposition}
Let $\ast_{\R}$ denote the restrained revision operator. Then $\ast_{\R} = \ast_\tl$.
\end{proposition}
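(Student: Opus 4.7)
The plan is to verify that the total preorder $\preceq_{\E \ast_\tl \alpha}$ produced by the composite definition coincides with the unique preorder specified by RAGM and (RR), since the latter pair is what characterises $\ast_\R$ by definition.

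First, I would identify the minimal worlds of $\preceq_{\E \ast_\tl \alpha}$. Since $\oplus$ satisfies RAGM, we have $[B(\E \ast_\tl \alpha)] = [B((\E \tl \alpha) \oplus \alpha)] = \min(\alpha, \preceq_{\E \tl \alpha})$. The key observation here is that equation (\ref{order}) leaves the relative ordering of $\alpha$-worlds alone: for any $v, w \in [\alpha]$, the disjunction $(v \in [\alpha]\text{ or }w \in [\neg\alpha])$ is automatically true, so the right-hand side of (\ref{order}) collapses to $v \preceq_\E w$. Hence $\min(\alpha, \preceq_{\E \tl \alpha}) = \min(\alpha, \preceq_\E)$, which is exactly the minimal set prescribed for $\E \ast_\R \alpha$ by RAGM.

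Next I would compare the two preorders on the non-minimal worlds. Because $\oplus$ is natural revision, it satisfies (CBR), so for any $v, w \notin [B(\E \ast_\tl \alpha)]$ we have $v \preceq_{\E \ast_\tl \alpha} w$ iff $v \preceq_{\E \tl \alpha} w$. But the condition $v \preceq_{\E \tl \alpha} w$ is given by equation (\ref{order}), whose right-hand side is \emph{syntactically identical} to the right-hand side of (RR). Therefore $\ast_\tl$ satisfies (RR). Combined with the agreement of the minimal sets, and the fact that a total preorder always places its globally minimal worlds strictly below everything else, this forces $\preceq_{\E \ast_\tl \alpha} = \preceq_{\E \ast_\R \alpha}$ for every $\E$ and every consistent $\alpha$, i.e.\ $\ast_\tl = \ast_\R$.

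The main obstacle, in my view, is the first step: nailing down that $\tl$ does not scramble the internal ordering of $\alpha$-worlds, so that the minimum of $\alpha$ under $\preceq_{\E \tl \alpha}$ agrees with its minimum under $\preceq_\E$. Once this is checked, the rest of the argument is essentially a side-by-side reading of (\ref{order}) and (RR) mediated by (CBR), and no further work is required.
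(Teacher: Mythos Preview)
Your proposal is correct and follows essentially the same approach the paper indicates: the paper simply remarks that the result ``can easily be seen by comparing (\ref{order}) above with condition (RR),'' and your argument carries out precisely this comparison, supplying the two details the paper leaves implicit (that $\tl$ preserves the ordering among $\alpha$-worlds so the minimal sets coincide, and that (CBR) transports equation (\ref{order}) to the non-minimal part of $\preceq_{\E \ast_\tl \alpha}$). There is nothing to add.
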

Thus we have proved that restrained revision can be viewed as a {\em combination} of two existing operators.

\section{How to Choose a Revision Operator}
\label{Sec:Choice}%
The contribution of this paper so far can be summarised as follows. We have argued for the replacement
of the Darwiche-Pearl framework by the class of admissible revision operators, arguing that the
former needs to be strengthened. In doing so we have eliminated natural revision,  but retained lexicographic revision and the $\bullet$ operator of Darwiche and Pearl as admissible operators. We have also introduced a new admissible revision operator, restrained revision, and argued for its plausibility. But this is not an argument that restrained revision is somehow unique, or more
preferred than other revision operators. The contention is merely that, for epistemic state revision, the Darwiche-Pearl framework is too weak and should be replaced by admissible revision. And restrained 
revision, being an admissible revision operator, is therefore only one of many revision operators deemed to be rational. The
question of which admissible revision operator to use in a particular situation
is one which depends on a number of issues, such as context, the strength with which certain beliefs are held, the source of the information, and so on. This point has essentially also been
made by Friedman and Halpern \citeyear{Friedman-ea:99a}. For example, Example 
\ref{Ex:red-bird-2} formed part of  an argument for the use of restrained revision, and against the use of lexicographic revision. In effect we used the example to argue that 
$\mathtt{red}$ ought to be in $\mathit{B}(\mathtt{\E\ast red \rightarrow bird\ast\lnot bird})$, where $B(\E)= Cn(\tt{red})$. But if we change the context slightly, it becomes an example in \emph{favour} of the use of lexicographic revision, and \emph{against} restrained revision. 
 \begin{example}
\label{Ex:red-bird-3} We observe a creature which seems to be red,
but we are too far away to determine whether it is a  bird or a land
animal. So we adopt the knowledge base $B(\E)= Cn(\mathtt{red})$. Next to us
is an expert on birds who remarks that, if the creature is indeed red, it must be
a bird. So we adopt the belief
$\mathtt{red\rightarrow bird}$. Then we get information from someone standing closer to the creature
that it is not a bird. Given this context, that is, the reliability of the expert combined with the statement that the creature 
initially seemed to be red, it is reasonable to adopt the lexicographic approach of ``more recent is best'' and conclude that the bird is not red. Formally,  $\mathtt{\lnot red}\in \mathit{B}(\mathtt{\E\ast red \rightarrow bird\ast\lnot bird})$, where $B(\E) = Cn(\tt{red})$. 
\end{example}
For a case where the source of the information dramatically affects the outcome, consider the following
example.
\begin{example}
\label{Ex:repeat}%
Consider the sequence of inputs where $p$ is followed by a finite number, say $n$,  of instances of the pair $p\rightarrow q, \lnot q$. (To make this more concrete, the reader might wish to substitute $p$
with $\mathtt{red}$ and $q$ with $\mathtt{bird}$.)
Since $p$ is not in direct conflict with any sentences in the sequence 
succeeding it, any revision operator satisfying the property O (and this includes restrained revision) will require that $p$ be contained in the knowledge base obtained from this revision sequence. Now, if each pair $p\rightarrow q$ and $\lnot q$
is obtained from a \emph{different} source, such a conclusion is clearly unreasonable. After all, such a sequence amounts to being told that $p$ is the case, followed by  $n$ different sources essentially telling you that $\lnot p$ is the case. 
On the other hand, if the pairs $p\rightarrow q$ and $\lnot q$ all come from the same source, the case is not so clear cut anymore. In fact, in this case one would expect the result to be the same as that 
obtained from the sequence $p, p\rightarrow q, \lnot q$, a sequence with the same formal structure as that employed in Example \ref{Ex:red-bird-2}, where restrained revision was seen to be a reasonable 
approach.
\end{example}

\noindent
Another example in which restrained revision fares less well is the following:\footnote{We are grateful to one
of the anonymous referees for suggesting this example.}
\begin{example}
Suppose we are teaching a class of students consisting of $n$ boys and $m$ girls, and
suppose the class takes part in a mathematics competition. For each $i = 1, \ldots, n$ and
$j = 1, \ldots, m$ let
the propositional variables $p_i$ and $q_j$ stand for ``boy $i$ won the competition''
and ``girl $j$ won the competition'' respectively, and suppose
initially we believe one of the boys won the competition, i.e., $B(\E) =
Cn(\phi \wedge \Sigma)$ where $\phi = \bigvee_i p_i$ and $\Sigma$ is just some sentence
expressing the uniqueness of the competition winner. Now suppose we interview each of the boys one
after the other, and each of them tells us that either one of the girls or he himself
won, i.e., we obtain the sequence of inputs $(\neg\phi \vee p_i)_i$. Suppose we are willing
to accept the boys' testimony. Using a revision operator which satisfies O will lead us to
believe boy $n$ won the competition, which seems implausible. Lexicographic revision gives the
desired result that one of the girls won the competition.
\end{example}

From these examples it is clear that an agent need not, and in most cases, \emph{ought} not to stick to the same revision operator every time that it has to perform a revision. This means that  the agent will  
keep on switching from one revision operator to another during the process of iterated revision.  Of course, this leads to the question of how to choose among the available (admissible) revision operators at any particular point.
A comprehensive answer to this question is beyond the scope of this paper, but we do provide some clues on how to address the problem. In brief, we contend that epistemic states have to be enriched, with a more
detailed specification of their internal structure. Looking back at the history of belief revision, we can see
that this is exactly how the field has progressed. In the initial papers, such as those on 
AGM revision, an epistemic state was taken 
to contain nothing more than a knowledge base. So, for example, basic AGM revision as characterised 
by the first six AGM postulates imposes no structure on epistemic states at all. 
We shall refer to these as \emph{simple} epistemic states.  With full 
AGM belief revision as characterised by all eight AGM postulates, the view is still one of the 
revision of knowledge bases, but now every revision 
operator for knowledge base $B$ is uniquely associated with a $B$-faithful total preorder; 
i.e., a total preorder on valuations with the models of $B$ as its minimal elements.
From here it is a small step to \emph{define} epistemic states to include such an ordering, i.e. 
to include the total preorder $\preceq_\E$ associated with an epistemic state $\E$ as part of the definition of $\E$.
We shall refer to these as 
\emph{complex} epistemic states.

This leads to two different views of the same revision process. If we view revision as an
operator on simple epistemic states we have many different revision operators; one corresponding to each of the $B$-faithful total preorders, but with no way to distinguish between them when having to choose a revision operator. Viewed as such, iterated revision is a process in which a (possibly) 
different revision operator is employed at every revision step. This is the principal view adopted 
by Nayak et al. \citeyear{Nayak-ea:2003a}.  However, if we view revision as an operator on complex 
epistemic states, every epistemic state  contains enough 
information to determine \emph{uniquely} the knowledge base, but not the faithful total preorder, resulting from the revision. In other words, we now have enough information encoded in an epistemic state to uniquely determine the knowledge base resulting from a revision, but we lack the information
to uniquely determine the full epistemic state. The Darwiche-Pearl framework, and now also admissible revision, place some constraints on the resulting epistemic state, but do not impose any 
additional structure on the complex epistemic state. In our view admissible revision for 
complex epistemic states is analogous to basic AGM revision for simple epistemic states. The next step would thus be to impose additional structure on complex epistemic states. This could possibly involve the addition of a second ordering
on valuations as was done, for example, by Booth et al. \citeyear{Booth-ea:2004a}.  In the case of simple epistemic states the effect of adding the two supplementary postulates is to constrain basic revision to the extent that each revision operator can be uniquely associated with a $B$-faithful total preorder. In that sense, the addition of the supplementary postulates allowed for the imposition of 
additional structure on simple epistemic states. Recall that one way of interpreting the two supplementary postulates is 
that they explain the interaction between revision by two sentences and revision by their conjunction, something the basic postulates do not address. 

So, as we have seen, the addition of the supplementary postulates leads to the definition of revision as operators on 
complex epistemic states. We conjecture that giving additional structure to complex epistemic states might involve the provision of postulates analogous to the two AGM supplementary postulates. In particular, we conjecture that such 
postulates might be such that they explain the interaction between two sentences and their conjunction, or disjunction, for
\emph{iterated revision}. Observe that none of the Darwiche-Pearl postulates, or the additional
postulates for admissible revision for that matter, address this issue. In fact the only
postulates to have been suggested (of which we are aware) which so far {\em do} address it are (Disj1) and (Disj2). We speculate that the
appropriate set of supplementary postulates for iterated revision (which may or may not include
the two just mentioned) will lead to the definition 
of extra structure on complex epistemic states, which can then be incorporated into an 
enriched version of complex epistemic states, with revision then being seen as 
operators on these enriched entities. Let us refer to them as \emph{enriched}
epistemic states. Enriched epistemic states will enable us to determine uniquely 
the complex epistemic state resulting from a revision, thereby solving the question
we started off with; that of determining which revision on complex epistemic states to 
use at every particular point during a process of iterated revision.   Below we shall briefly discuss a possible way of enriching complex epistemic states. But note also that a recent proposal for
doing so is that of Booth et al.  \citeyear{Booth-ea:2006a}. It is instructive to observe that 
 (Disj1)  and (Disj2) both hold in their framework. 

The proposed outline above  is not without its pitfalls. The most obvious problem with such 
an approach is that it leaves us with a meta-version of the dilemma that we started off with.
Using enriched epistemic states we are now able to uniquely determine the complex epistemic 
states resulting from a revision, but not the resulting \emph{enriched} epistemic state.  We can 
lessen the problem by constraining the permissible resulting enriched epistemic states in the same way that admissible revision constrains the permissible complex epistemic states, but chances are that
this whittling down will not produce a single permissible enriched epistemic state.
And, of course, this is bound to occur over and over again. That is, whenever we solve the problem of uniquely  determining an epistemic state with a certain structure by a process of further enrichment, we will be saddled with the question of how to uniquely determine the further enriched epistemic state resulting from a revision. Our conjecture is that at some level a point will be reached where  
constraining further enriched epistemic states, \emph{{\'a} la} admissible revision, will eventually lead to a
unique further enriched epistemic state associated with every revision.  Only further research 
will determine whether our conjecture holds water. 

%\section{Conclusion}
%\label{Sec:Conc}%
In conclusion, we have shown that the Darwiche-Pearl arguments lead to the
acceptance of the admissible revision operators as a class worthy of
study. The restrained revision operator, in particular, exhibits
quite desirable properties. Besides taking the place of natural
revision as the operator adhering most closely to the principle of
minimal change, its satisfaction of the properties (O), (Q) and (U) shows
that it does not unnecessarily remove previously obtained
information.

For future work we would also like to explore more thoroughly the class
of admissible revision operators. In this paper we saw that
restrained revision and lexicographic revision lie at opposite ends
of the spectrum of admissible operators. They represent respectively
the most conservative and the least conservative admissible
operators in the sense that they effect the most changes and the least changes, 
respectively, in the relative ordering of valuations permitted by admissible revision.
A natural question is whether there exists an
axiomatisable class of admissible operators which represents the
``middle ground''. One clue for finding such a class can be found in
the counteracts relation $\counteracts_{\E}$ which can be derived from
an epistemic state $\E$. As we said, this relation depends only on
the preorder $\preceq_{\E}$ associated to $\E$. In fact, given {\em
any} total preorder $\preceq$ over $V$ we can define the relation
$\counteracts_\preceq$ by
\[
\alpha \counteracts_\preceq \beta\
\textrm{iff}\
\min(\alpha, \preceq) \subseteq [\lnot\beta]\
\textrm{and}\
\min(\beta, \preceq) \subseteq [\lnot\alpha].\
\]
Then clearly $\counteracts_{\E} = \counteracts_{\preceq_{\E}}$.
Furthermore if $\preceq$ is the full relation $V \times V$ then
$\counteracts_\preceq$ reduces to logical inconsistency. A
counteracts relation {\em stronger} than $\counteracts_{\E}$, but
still {\em weaker} than logical inconsistency can be found by
setting $\counteracts = \counteracts_{\preceq'}$, where $\preceq'$
lies somewhere {\em in between} $\preceq_{\E}$ and $V \times V$. Hence
one avenue worth exploring might be to assume that from each
epistemic state $\E$ we can extract not one but {\em two} preorders
$\preceq_{\E}$ and $\preceq'_{\E}$ such that $\preceq_{\E} \subseteq
\preceq'_{\E}$. Then, instead of only requiring $\alpha
\counteracts_{\E} \beta$ to deduce $\lnot\alpha \in B(\E \ast \alpha
\ast \beta)$, as is done with restrained revision (the postulate
(D)), we could require the stronger condition $\alpha
\counteracts_{\preceq'_{\E}} \beta$ for this to hold. We are currently
experimenting with strategies for using the second preorder to {\em
guide} the manipulation of $\preceq_{\E}$ to enable this property to
be satisfied. The use of a second preorder can be seen 
as a way of enriching the epistemic state, and might thus contribute
to the solution of the  choice of revision operators discussed in 
Section \ref{Sec:Choice}.

Some more future work relates also to the two extreme cases revision,  but looked at from a different angle. As mentioned earlier, lexicographic revision is a formalisation of the ``most recent 
is best'' approach to revision taken to its logical extreme. This approach is exemplified by the 
($\E\ast$2) postulate, also known as Success, which requires a revision to be successful, in the 
sense that the epistemic input provided always has to be contained in the resulting knowledge base.  
Given that ($\E\ast$2) is one of the postulates for admissible revision, this requirement carries 
over even to restrained revision, which is on the opposite end of the spectrum for admissible
revision. But this means that the admissible revision operator which differs the most from 
lexicographic revision still adheres to the dictum of ``most recent is best'', which raises 
the question of why the most recent input is given such prominence. The relaxation of 
this requirement would imply giving up  ($\E\ast$2) and venturing into the area known 
as \emph{non-prioritised} revision \cite{Booth:2001a,Chopra-ea:2003a,Hanssonnpr}.   We speculate that an appropriate relaxation
of admissible revision, with ($\E\ast$2)  removed as a requirement, will lead to a class of 
(non-prioritised) revision operators strictly containing admissible revision, and with lexicographic revision still at one end of the spectrum, but with the other end of the spectrum occupied by the
operator studied by Papini \citeyear{Papini:98b} which was used as a sub-operation of restrained revision
in Section \ref{Sec:Composite}. This operator is
formalised by the extreme version of ``most recent is worst''; in other words, 
``the older the better''.
 
\section*{Acknowledgements}
Much of the first author's work was done during stints as a
researcher at Wollongong University and Macquarie University, Sydney.
He wishes to thank Aditya Ghose and Abhaya Nayak both for making it
possible to enjoy the great working environments there, and also for
some interesting comments on this work.
Thanks are also due to Samir Chopra who contributed to a preliminary version of the paper,
Adnan Darwiche for clearing up some misconceptions on the definition of epistemic states, 
and three anonymous referees for their valuable and insightful comments. 
National ICT Australia is funded by the Australia Government's Department of Communications,
 Information and Technology and the Arts and the Australian Research Council through Backing
 Australia's Ability and the ICT Centre of Excellence program.
 It is supported by its members the Australian National University, University of NSW, ACT Government,
 NSW Government and affiliate partner University  of Sydney.
\bibliographystyle{theapa}
\bibliography{jair-rev5}
\end{document}